\documentclass[final,main]{article}

\PassOptionsToPackage{numbers,sort&compress}{natbib}

\usepackage{neurips_2026}
\usepackage[utf8]{inputenc} 
\usepackage[T1]{fontenc}    
\usepackage{multicol}
\usepackage{multirow}
\usepackage{microtype}
\usepackage{enumitem}
\usepackage{graphicx}
\usepackage{caption}
\usepackage{subcaption}
\usepackage[linesnumbered,ruled,vlined,commentsnumbered]{algorithm2e}
\usepackage{algpseudocode}
\usepackage{subcaption}
\usepackage{booktabs} 
\usepackage{booktabs}
\usepackage{colortbl}
\usepackage{diagbox}
\usepackage[table]{xcolor}
\usepackage{pgf}
\definecolor{asrLow}{RGB}{255,245,245}
\definecolor{asrMid}{RGB}{255,220,220}
\definecolor{asrHigh}{RGB}{255,190,190}

\definecolor{asrBlueLow}{RGB}{245,248,255}
\definecolor{asrBlueMid}{RGB}{220,230,255}
\definecolor{asrBlueHigh}{RGB}{190,210,255}

\definecolor{acclow}{RGB}{203, 249, 215}

\usepackage[colorlinks,citecolor=red,urlcolor=blue,bookmarks=false,hypertexnames=true]{hyperref} 
\usepackage[toc,page]{appendix}
\usepackage{amsmath}
\usepackage{amssymb}
\usepackage{mathtools}
\usepackage{amsthm}
\usepackage{nicefrac}       

\usepackage[capitalize,noabbrev]{cleveref}

\newtheorem{theorem}{Theorem}
\newtheorem*{theorem*}{Theorem}
\newtheorem{lemma}[theorem]{Lemma}
\newtheorem{proposition}[theorem]{Proposition}
\newtheorem*{proposition*}{Proposition}
\newtheorem{definition}[theorem]{Definition}
\newtheorem{remark}[theorem]{Remark}
\newtheorem{corollary}[theorem]{Corollary}
\newtheorem{assume}[theorem]{Assumption}

\usepackage[toc,page]{appendix}
\usepackage[most]{tcolorbox}
\tcbuselibrary{theorems}

\tcbset{
  mytheobox/.style={
    enhanced,
    breakable,
    colback=gray!10!white,
    leftrule=1mm,
    toprule=0pt,
    bottomrule=0pt,
    rightrule=0pt,
    arc=0pt,
    left=0.7mm, right=0.7mm,
    top=0.7mm, bottom=0.7mm,
    before skip=7pt,
    after skip=7pt, 
  }
}

\newtcolorbox{theoremshadedbox}[1][]{mytheobox, title=#1}

\newtcolorbox{shadedtheorem*}[1][]{
  enhanced,
  breakable,
  colback=gray!5!white,
  colframe=gray!50!black,
  sharp corners,
  boxrule=0.6pt,
  fonttitle=\bfseries,
  title=Theorem,
  attach boxed title to top left={yshift=-1mm,xshift=2mm},
  boxed title style={colback=gray!20!white},
  #1
}

\newtcolorbox{propositionshadedbox}[1][]{mytheobox, title=#1}

\newtcolorbox{lemmashadedbox}[1][]{mytheobox, title=#1}

\newtcolorbox{definitionshadedbox}[1][]{mytheobox, title=#1}

\newtcolorbox{remarkshadedbox}[1][]{mytheobox, title=#1}

\newtcolorbox{corollaryshadedbox}[1][]{mytheobox, title=#1}

\newtcolorbox{assumptionshadedbox}[1][]{mytheobox, title=#1}

\newtcolorbox{shadedequation}[1][]{colback=gray!5!white, colframe=gray!50!black, boxrule=0.6pt, sharp corners, title=#1, coltitle=black, enhanced}

\newcommand{\cn}{\mathbb{C}^n}
\newcommand{\cnn}{\mathbb{C}^{n\times n}}
\newcommand{\cNn}{\mathbb{C}^{N\times n}}
\newcommand{\rn}{\mathbb{R}^n}

\newcommand{\Rnn}{\mathbb{R}^{n\times n}}

\newcommand{\fn}{\mathbb{F}^n}
\newcommand{\fN}{\mathbb{F}^N}

\newcommand{\fNn}{\mathbb{F}^{N\times n}}

\newcommand{\xadv}{x_\mathrm{adv}}
\newcommand{\edgf}{\varepsilon_{\mathrm{DGF}}}

\newcommand{\sgn}{\mathrm{sign}}

\newcommand{\loss}{\mathcal{L}}

\newcommand{\pertset}{\mathcal{C}}

\newcommand{\tilx}{x}

\usepackage{nicefrac}       
\usepackage{microtype}      

\title{Frame the adversary:\\ a structure-aware attack methodology}

\author{
Vicky Kouni\\
Paris Dauphine - PSL University\\
\texttt{vasiliki.kouni@lamsade.dauphine.fr}
\And
Stelios Perrakis\\
CentraleSupélec - Paris-Saclay University\\
\texttt{stelios.perrakis@centralesupelec.fr}
\And
Francis Bach\\
Inria - Ecole Normale Supérieure -
PSL University\\
\texttt{francis.bach@inria.fr}
\And
Pascal Frossard\\
EPFL\\
\texttt{pascal.frossard@epfl.ch}
\And
Yann Chevaleyre\\
Paris Dauphine - PSL University\\
\texttt{yann.chevaleyre@lamsade.dauphine.fr}
}

\begin{document}

\maketitle

\begin{abstract}
Frequency-based adversarial attacks have recently grown popular by exploiting spectral sensitivities shared across neural architectures. Unlike spatial perturbations, frequency-based attacks expose deeper vulnerabilities, making them especially valuable for robust evaluation of safety-critical and security-sensitive applications. Yet, existing approaches are typically not derived as solutions to an optimization problem that explicitly captures transform-domain structure. In this paper, we propose a methodology for crafting principled frequency-based adversarial attacks, via a dedicated optimization framework. A cornerstone of our method hinges on the introduction of a perturbation constraint set, tied to highly structured non-orthogonal transforms, well-known for their flexible, non-predefined frequency handling. We prove that the attacks emerge as weighted $\ell_2$-projections onto this set, yielding a general and controlled attack generation mechanism. By this, we provide a clear geometric attack characterization, ensuring alignment between the optimization objective and the perturbation constraint. We assess our framework on standardized datasets, for pretrained and adversarially robust models. Results highlight that our attacks, being solutions to an optimization problem, over a structured perturbation set, are highly effective, even across different, unseen architectures. Our methodology could serve as a theoretical baseline for designing and analyzing transformed-based attacks, targeting fundamental model vulnerabilities, instead of mere architecture-specific artifacts typically studied in the robustness literature.
\end{abstract}

\section{Introduction}
Frequency-based attacks \cite{yao2024interpretable,zhang2024fourier,wang2024boosting,zheng2025boosting} have recently emerged as a powerful paradigm in adversarial learning. By leveraging the sensitivity of models to frequency components of the data, these attacks reveal vulnerabilities that often remain hidden in the spatial domain. This helps prevent catastrophic failures in safety-critical vision systems or privacy protection, since frequency-based attacks are highly effective, even across unseen target models, thus implying high transferability \cite{qian2024enhancing,wang2024boosting,zheng2025boosting}. The latter is often attributed to the ability of frequency-based attacks to exploit shared spectral biases inherent to neural architectures \cite{fridovich2022spectral,rahaman2019spectral}, rather than overfitting to model-specific spatial features, thereby targeting common vulnerabilities that persist across diverse model designs.

Most existing approaches rely on projected gradient descent (PGD) \cite{madry2017towards}, by directly ``plugging in'' to the attack pipeline a frequency transform, e.g., Fourier \cite{zhang2024fourier}, discrete cosine \cite{jia2022exploring}, or wavelet \cite{luo2022frequency,yi2024time}, along with a selection mechanism on the transformed data. The motivation behind this mechanism hinges upon selectively amplifying or suppressing low-/high-frequency components, that carry the most structural information, thus exposing vulnerabilities that remain hidden in the pixel space. In this way, manipulating spectral information offers a more efficient attack regime, enabling perturbations that achieve higher attack success rates, with comparable computational effort.

Despite their success, state-of-the-art (SotA) frequency-based attacks are not accompanied by a theoretical justification of their design, thereby carrying a strong frequency bias. What is more, such attacks do not generally emerge as solutions to an open optimization problem, which, in turn, would reflect the true nature of adversarial learning in the presence of transforms, and help explaining the effectiveness and transferability of the attacks. Still, a few recent works have proposed optimization problems for creating frequency-based adversarial attacks \cite{jia2022exploring,zhou2025numbod}, but these are application-dependent, thus rendering the generalization to other settings unclear. The interaction between the attack constraint, the transform-domain representation, and the resulting perturbation structure is thus not fully understood yet. Without a principled methodology explaining the effectiveness and transferability of frequency-based attacks, it remains elusive whether attacks should rely on certain frequency components and in what way \cite{wang2020high,zhang2024fourier,zhu2023ligaa,fu2024transferable}.

\begin{figure}[ht!]
    \centering
    \begin{subfigure}[h]{0.4\textwidth}
        \centering
        \includegraphics[width=0.95\textwidth]{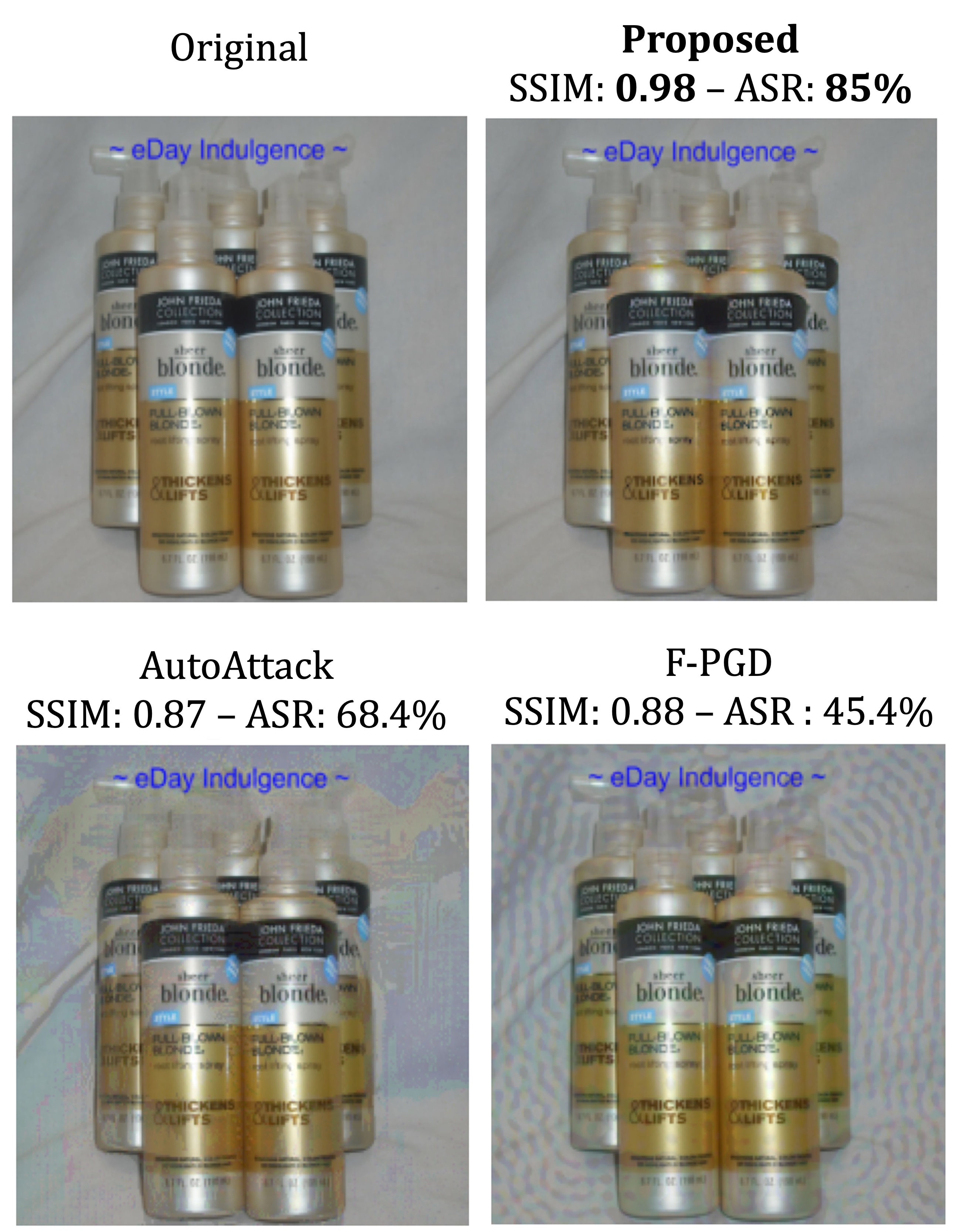}
        \caption{Example \textbf{ImageNet} adversarial images, with \textbf{SSIM} quality and \textbf{ASR} attack efficiency values. Bold letters indicate the method with the best performance in terms of both evaluation metrics.}
        \label{shampoos}
    \end{subfigure}%
    \hfill
    \begin{subfigure}[h]{0.55\textwidth}
    \centering
    \includegraphics[width=0.95\textwidth]{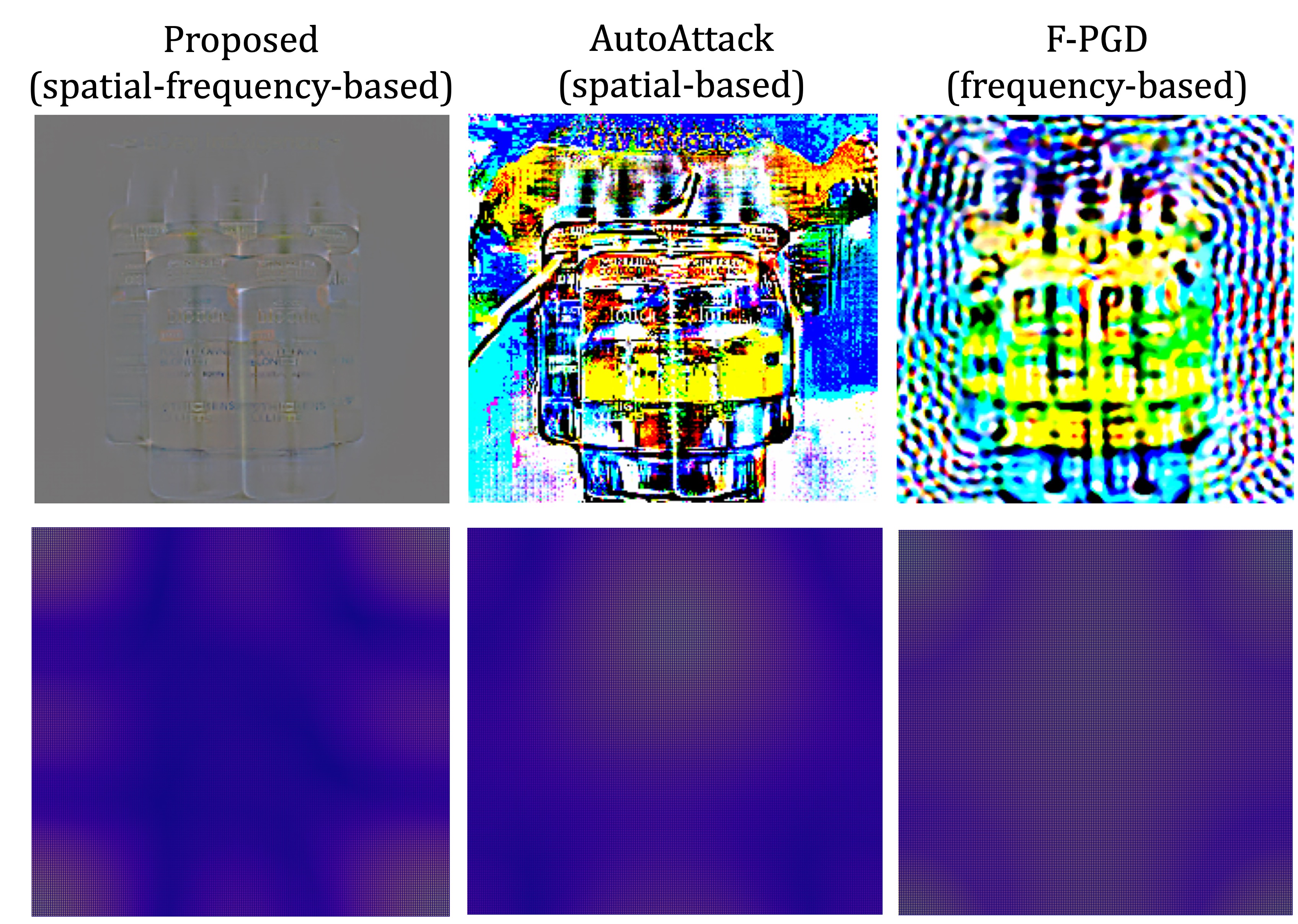}
    \caption{\textbf{Spatial (top)} and \textbf{frequency (bottom)} representation of the \textbf{proposed attack} and the two \textbf{baselines}, corresponding to the adversarial examples of Fig.~\ref{shampoos}. In contrast to the baselines, our attack enjoys strategically-placed energy localization, combining the ``best of both worlds''. This allows it to align with the model's spectral sensitivities, exploiting critical feature extraction pathways to achieve the best attack success rate.}
    \label{spectrashampoos}
    \end{subfigure} 
    \caption{\footnotesize Evaluation of the baselines ($\varepsilon=8/255$) and our attack ($\edgf=\eta8/255$), on an exemplary RobustBench model \cite{amini2024meansparse}.}
    \label{imagenet}
\end{figure}

Motivated by recent advances in frequency-based adversarial attacks, and by the lack of a principled method and solid understanding of their underpinnings, in this paper, we aim to address the following core question:
\begin{equation}
\label{q1}
\mbox{\textit{What are the underlying mechanisms of success of frequency-based adversarial attacks?}} \tag{$\dagger$}
\end{equation}
To that end, our main contributions are as follows:
\begin{itemize}
    \item We propose \textit{a new attack methodology, based on overcomplete (non-orthogonal) spatial-frequency (SF) transforms}, which are popular tools for image-processing tasks \cite{paukner2007foundations,liu2024gabor}. The overcompleteness enables flexible and structured representations that orthogonal transforms like Fourier or the discrete cosine cannot capture in baseline frequency-based attacks. 
    \item Our framework consists of designing an optimization problem, with \textit{a cornerstone being the introduction of a new perturbation set} for constraining the attacks, which satisfy a weighted SF norm constraint. The latter spreads attack energy across structured SF directions, aligned with models' spectral sensitivities, making the attack more effective and transferable. We employ PGD w.r.t. the proposed set and prove that its projection yields a closed-form attack, as the solution to our optimization problem. To the best of our knowledge, this is \textit{the first principled methodology giving a convincing partial answer to \eqref{q1}}, by exploiting the geometry of the set constraining the attacks, through the structure of the SF transform.
    \item We evaluate the success of our proposed framework through a series of experiments. \textit{Our findings are three-fold}: a) our attack is highly effective on a range of different source-target models, exposing vulnerabilities even of defended models, thereby indicating high transferability; b) comparisons with SotA representative spatial- and frequency-based attacks demonstrate that our method outperforms the baselines in terms of effectiveness and transferability, while preserving the underlying image structure. This behavior showcases that attacks resulting from a well-defined optimization problem, while respecting the structure of the transforms involved, can enjoy improved performance, without sacrificing visual quality; c) the observed transferability sheds light on the underpinnings of frequency-based attacks: structuring the perturbation set via geometry-aligned transforms, is key to capturing spectral sensitivities shared across architectures, bolstering our theoretical framework.
\end{itemize}

Overall, our principled methodology illuminates -- beyond common high/low frequency assumptions -- the effectiveness and transferability of frequency-based adversarial attacks, offering a flexible alternative to manually designed frequency selection strategies in the attacks. This enables a systematic study of how properties of involved transforms influence adversarial vulnerability. Our framework could be deployed as a testbed for developing more robust evaluation protocols and defenses that explicitly account for transform-domain structure, e.g., benchmarking robustness for structured, transformed-based perturbation sets. Finally, it could contribute to the development of a more holistic understanding of models' vulnerabilities to data perturbations, towards building safer AI systems.

\section{Related work}
\label{freattacks}
\textbf{Spatial-based attacks.} 
For a loss function $\mathcal{L}(\cdot,\cdot)$ and input-output data pairs $(x,y)$, a standard regime for creating adversarial attacks in the pixel space consists of solving
\begin{equation}
    \begin{split}
    \label{maxattack}
    \max_{\delta\in\rn}\,&\mathcal{L}(x+\delta,y)\quad\text{such that }\quad\|\delta\|_p\leq\varepsilon.
    \end{split}
\end{equation}
The popular method of PGD \cite{madry2017towards} relies on the projection on the $\ell_p$-ball to yield the solution of \eqref{maxattack} after $k=0,\dots,K-1$ iterations, i.e.,
\begin{equation}
\label{deltak}
    \delta^{k+1}=\Pi_{\|\cdot\|_p}(\delta^k+\gamma\cdot\mathrm{norm}(\nabla_\delta\loss(x+\delta^k,y))).
\end{equation}
Here, $\Pi_{\|\cdot\|_p}(\cdot)$ is the projection operator on the $\ell_p$-ball, $\gamma>0$ is the step size, and $\mathrm{norm(\cdot)}$ pertains to some form of normalization for the gradients, to ensure that each update moves optimally within the allowed $\ell_p$-ball. For instance, if $p=2$, then $\mathrm{norm}(\nabla_\delta\loss(x+\delta^k,y))=\nabla_\delta\loss(x+\delta^k,y)/\|\nabla_\delta\loss(x+\delta^k,y)\|_2$ while for $p=\infty$, it holds $\mathrm{norm}(\nabla_\delta\loss(x+\delta^k,y))=\sgn(\nabla_\delta\loss(x+\delta^k,y))$. This implies that the gradients are normalized by using the dual norm of $\|\cdot\|_p$. Finally, the adversarial image is $x_\mathrm{adv}=x+\delta^K$, clipped in $[0,1]$ to fall within a valid image range.

\textbf{Frequency-based attacks.} 
The PGD framework can be extended to attacks relying on a frequency representation of $x$. SotA frequency-based attacks take into account the sensitivity of models to high/low frequencies, and create PGD-type attacks, which rely on some orthogonal frequency transform $D\in\Rnn$, e.g., wavelet \cite{freqattack,fu2024transferable}, discrete cosine \cite{yao2024interpretable,wang2024boosting,zheng2025boosting}, or Fourier \cite{li2021f,zhang2024fourier,qian2024enhancing}. Then, a frequency-based counterpart of \eqref{deltak} is generally formulated as
\begin{equation}\label{freqattack}
    \delta^{k+1}=\Pi_{\|\cdot\|_p}(\delta^k+\gamma\cdot\mathrm{norm}(g^k)),
\end{equation}
where
\begin{equation}
\label{maskgrad}
    g^k=\nabla_\delta\mathcal{L}(D^{-1}\mathrm{mask}(D(x+\delta^k)),y),
\end{equation}

with the resulting adversarial image being $\xadv=x+\delta^K$, clipped in $[0,1]$, similarly to the spatial attacks. In \eqref{maskgrad}, $\mathrm{mask}(\cdot)$ denotes a mechanism selecting a particular frequency range to which models are sensitive, to increase the effectiveness of the attack, or make it easily transferable across target models. This design is motivated by observations that neural networks exhibit spectral biases \cite{fridovich2022spectral,rahaman2019spectral}, making them more vulnerable to perturbations in certain frequency ranges. For instance, $\mathrm{mask}(\cdot)$ can be a random multiplicative or binary mask \cite{yao2024interpretable,zhang2024fourier}, or some nonlinear transform selecting the high-/low-frequency components of the input image \cite{wang2024boosting,duan2021advdrop}. 

Despite their empirical success, frequency-based attacks of the form \eqref{maskgrad} are not accompanied by a theoretical justification of their design, since the data's representation w.r.t. $D$ is directly plugged in to the gradient of the loss. Consequently, the resulting perturbation is not guaranteed to solve a maximization problem that explicitly incorporates transform-domain structure. Moreover, standard $\ell_p$-constraints do not reflect the geometry induced by the transform, leading to a mismatch between the constraint set and the attack construction.

Recent works have taken steps towards addressing these limitations by formulating optimization problems that include structure in the frequency domain, via task-specific masking strategies. While effective in specific applications like face forgery \cite{jia2022exploring} or object detection \cite{zhou2025numbod}, these methods lack a general framework and do not clearly characterize the constraint sets onto which PGD projects. As a result, deriving frequency-based attacks as solutions to an appropriately defined optimization problem, satisfying a structured perturbation constraint, stays largely unexplored.
\section{Background}
\label{back}
\paragraph{Notation.} We write $\mathbb{F}$ for $\mathbb{R}$ or $\mathbb{C}$. We denote with $\Psi^*$ the conjugate transpose, i.e., $\Psi^*=\overline{\Psi}^T$, and with $\Psi^+=(\Psi^*\Psi)^{-1}\Psi^*$ the pseudo-inverse (Moore Penrose inverse) of $\Psi\in\mathbb{C}^{N\times n}$; the real part of $\Psi$ is $\Re(\Psi)$.

\textbf{Frames vs. orthogonal bases.} While orthogonal bases provide elegant data representations, overcomplete bases, dubbed frames \cite{casazza2012finite}, offer crucial data processing advantages. A frame is an overcomplete set that spans the ambient space with more vectors than the space's dimension (cf. Appendix~\ref{mainappen}). This overcompleteness can be leveraged to design the frame in a flexible fashion, to align with data geometry. For instance, SF frames are often constructed to match data geometry, hence improving performance in feature extraction \cite{luan2018gabor}, since overcompleteness enables the frame to flexibly capture localized and oriented patterns \cite{paukner2007foundations}. Next, we present operators associated to a frame, and move on to the representative class of Gabor frames; these will constitute the basis of our adversarial framework.
\begin{definition}[\cite{casazza2012finite}]
\label{frameop}
    Let $(\psi_i)_{i=1}^N$, $N\geq n$, be a frame in $\fn$. Then, for $x\in\fn$, the \textit{analysis operator} $\Psi\in\fNn$ associated to the frame is defined as $\Psi x:=(\langle x,\psi_i\rangle)_{i=1}^N$. The Grammian operator associated to the frame is defined in matrix form as $G=\Psi\Psi^*\in\mathbb{F}^{N\times N}$.
\end{definition}
\begin{remark}\label{expansion}
The analysis operator is injective, while the Grammian is rank deficient whenever $N>n$, and reveals intriguing properties of the frame: it reflects the linear dependencies among the frame elements and thus informs on its algebraic structure. Frames enable lossless data expansions: for any $x\in\fn$, it holds $x=\Psi^*(\Psi^*)^+ x=\Psi^+\Psi x$, where $\Psi^+=(\Psi^*\Psi)^{-1}\Psi^*$ is the pseudoinverse of $\Psi$, serving as its left-inverse, due to the injectivity of $\Psi$. Importantly, frame theory is extendable to more dimensions, e.g., for data $x\in\mathbb{R}^{n\times n}$, by using tensor algebra; we give more details on Appendix~\ref{mainappen}. 
\end{remark}
Given a row-normalized $\Psi\in\fNn$, its \textit{average coherence} \cite{bajwa2010gabor} is defined as
    \begin{align}
    \label{averagecoh}
    \nu(\Psi)&=\frac{1}{N-1}\max_i\bigg|\sum_{j=1,j\neq i}^N\langle\psi_i,\psi_j\rangle\bigg|,\qquad i=1,\dots,N.
    \end{align}
    Average coherence relies on the Grammian and reflects a frame's geometry, by measuring the spread of the rows within the $n$-dimensional unit ball: the smaller the coherence, the more spread out the row vectors. On the contrary, large values of $\nu(\Psi)$ pinpoint to geometric degeneracy, e.g., vectors are poorly distributed on the ambient space, due to the large number of linear dependencies among them.
\begin{definition}[\cite{paukner2007foundations}]
\label{dgf}
    A \textit{discrete Gabor frame} (DGF) results from SF shifts of a nonzero vector $g\in\fn$. The shifts are controlled by the Gabor SF parameters $\alpha,\beta>0$, $\alpha\beta<n$, so that the frame elements are $ \psi_{k,p,j}=\exp(2\pi i\beta pj/n)g(j-\alpha k)$, where $k=0,\dots,(n/\alpha)-1$, $p=0,\dots,(n/\beta)-1$, $j=0,\dots,n-1$. By creating a grid over $p,k$, and representing them as a single index $i$, we obtain $i=0,\dots,N-1$, where $N=n^2/\alpha\beta$, so that we can equivalently write $\psi_{k,p,j}=\psi_{i,j}$. Then, the associated analysis operator is $\Psi\in\mathbb{C}^{N\times n}$, whose row elements are $\psi_i^*\in\cn$, $i=0,\dots,N-1$.
\end{definition}
We note here that the Gabor parameters $\alpha,\beta$ crucially affect the representation ability of $\Psi$ \cite{kouni2023star}, and choosing them respectively is a challenging task \cite{handbook}. Moreover, by Remark~\ref{expansion}, we obtain $x=\Psi^+\Psi x$.

\textbf{Why Gabor frames?} 
Unlike global orthogonal transforms like Fourier/discrete cosine, DGFs provide a flexible SF data representation \cite{christensen2003introduction}, with overcompleteness enabling simultaneous localization across space, frequency, and orientation, hence rendering them as great candidates for representing edges and directional patterns that naturally arise in images \cite{luan2018gabor,filus2024evaluating}. This adaptive representation to structured and overlapping spectral patterns makes overcompleteness particularly important to the adversarial setting: by expressing attacks in the DGF domain, one can shape them to align with spectral sensitivities shared across different network architectures. What is more, the close connection between Gabor representations and visual processing mechanisms \cite{jones1987evaluation,daugman1985uncertainty} helps explain why Gabor frames induce norms closely related to human perception \cite{balle2012subjective}, which spatial-$\ell_p$ norms do not; this further motivates the construction of a DGF-based attack methodology, which we present in Section~\ref{main}.

\section{Main results}
\label{main}
\begin{algorithm}[t!]
\footnotesize
\SetAlgoLined
\SetAlgoNoEnd
\SetKwData{Left}{left}\SetKwData{This}{this}\SetKwData{Up}{up}\SetKwFunction{Union}{Union}\SetKwFunction{FindCompress}{FindCompress}\SetKwInOut{Input}{Input}\SetKwInOut{Output}{Output}
\Input{Original image $x$, DGF analysis operator $\Psi$, attack level $\varepsilon>0$}
\Output{Adversarial image $\xadv$}
Calculate $\Re(M_D)$ and $(\Re(M_D))^{-1}$; initialize $\delta_0$\\
\For{$k=0,\dots,K-1$}{
$\Tilde{g}^k=\nabla_\delta\loss(x+\delta^k,y)$\;
$\delta^{k+1}=\Pi_{\pertset_{M_D}}\bigg(\delta^k+\gamma\frac{(\Re(M_D))^{-1}\Tilde{g}^k}{\sqrt{(\Tilde{g}^k)^T(\Re(M_D))^{-1}\Tilde{g}^k}}\bigg)$\;}
$\xadv=\mathrm{clip}(x+\delta^K,0,1)$
\caption{DGF-PGD}
\label{dgfpgd}
\end{algorithm}
\begin{definition}
    \label{admissible}
    For $\varepsilon>0$ and $\Psi\in\fNn$ being a DGF analysis operator, we say that $\delta\in\fn$ is a spatial attack of level $\varepsilon$ if $\|\delta\|_{\fn}\leq\varepsilon$, w.r.t. some norm $\|\cdot\|_{\fn}$ in $\fn$. Additionally, we say that $w\in\fN$ is an admissible frame attack of level $\varepsilon$ if $\|w\|_{\fN}\leq\varepsilon$ and $w=\Psi\delta$, for some $\delta\in\fn$.
\end{definition}
\begin{lemma}
    [Proof in Appendix~\ref{admappen}]
\label{admlemma}
For $\delta\in\fn$ and $\varepsilon>0$, let $w=\Psi\delta\in\fN$, with $\Psi\in\fNn$ being a DGF analysis operator. If $\delta\in\fn$ is a spatial attack of level $\varepsilon$, then $w=\Psi\delta$ is an admissible frame attack of level $\|\Psi\|_{\mathrm{op}}\varepsilon$, where $\|\Psi\|_{\mathrm{op}}=\max_{\|\delta\|_{\fn}\leq1}\|\Psi\delta\|_{\fN}$. Additionally, if $w\in\fN$ is an admissible frame attack of level $\varepsilon$, then $\delta=\Psi^+w$ is a spatial attack of level $\|\Psi^+\|_{\mathrm{op}}\varepsilon$, where $\|\Psi^+\|_{\mathrm{op}}=\max_{\|w\|_{\fN}\leq1}\|\Psi^+w\|_{\fn}$.
\end{lemma}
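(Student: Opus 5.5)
The plan is to prove both directions directly from the definition of the operator norm, treating the two parts of the lemma as mirror images. Throughout, fix arbitrary norms $\|\cdot\|_{\fn}$ and $\|\cdot\|_{\fN}$. The only fact we need is the standard inequality $\|A z\|\leq\|A\|_{\mathrm{op}}\|z\|$ for a linear map $A$ between finite-dimensional normed spaces. It holds because, for $z\neq0$, the vector $z/\|z\|$ lies in the unit ball, so $\|A(z/\|z\|)\|\leq\|A\|_{\mathrm{op}}$, and linearity plus homogeneity of the norm then give the bound (the case $z=0$ is trivial). Finiteness of the maxima defining $\|\Psi\|_{\mathrm{op}}$ and $\|\Psi^+\|_{\mathrm{op}}$ follows from compactness of the unit ball in finite dimensions and continuity of linear maps.

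\textbf{First part.} Let $\delta$ be a spatial attack of level $\varepsilon$, i.e.\ $\|\delta\|_{\fn}\leq\varepsilon$, and set $w=\Psi\delta$. Then
\[
\|w\|_{\fN}=\|\Psi\delta\|_{\fN}\leq\|\Psi\|_{\mathrm{op}}\|\delta\|_{\fn}\leq\|\Psi\|_{\mathrm{op}}\,\varepsilon .
\]
Moreover, $w$ lies in the range of $\Psi$ by construction. Both conditions of Definition~\ref{admissible} therefore hold, so $w$ is an admissible frame attack of level $\|\Psi\|_{\mathrm{op}}\varepsilon$.

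\textbf{Second part.} Let $w$ be an admissible frame attack of level $\varepsilon$, so $\|w\|_{\fN}\leq\varepsilon$ and $w=\Psi\delta'$ for some $\delta'\in\fn$. Set $\delta=\Psi^+w$. By Remark~\ref{expansion}, $\Psi^+$ is a left inverse of the injective operator $\Psi$, hence $\delta=\Psi^+\Psi\delta'=\delta'$. This means the spatial attack recovered from $w$ is exactly the one that generated it. Applying the operator-norm inequality to $\Psi^+$ gives
\[
\|\delta\|_{\fn}\leq\|\Psi^+\|_{\mathrm{op}}\|w\|_{\fN}\leq\|\Psi^+\|_{\mathrm{op}}\,\varepsilon ,
\]
so $\delta$ is a spatial attack of level $\|\Psi^+\|_{\mathrm{op}}\varepsilon$.

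\textbf{Main obstacle.} There is no real difficulty; the only point needing care is that $\Psi^+=(\Psi^*\Psi)^{-1}\Psi^*$ is well defined, i.e.\ that $\Psi^*\Psi$ is invertible. This follows from the frame property, or equivalently from the injectivity of $\Psi$ stated in Remark~\ref{expansion}, since for an injective $\Psi$ the Gram matrix $\Psi^*\Psi$ is positive definite. We would also remark that the identity $\delta=\delta'$ is not needed for the norm bound itself; it only shows the correspondence between frame attacks and spatial attacks is consistent.
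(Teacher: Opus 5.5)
Your proof is correct and follows the paper's argument: both parts apply the operator-norm inequality, once to $\Psi$ and once to $\Psi^+$, and the lossless expansion $\Psi^+\Psi=I$ identifies $\Psi^+w$ with the $\delta$ that generated $w$. Your additional remarks, on finiteness of the operator norms and on invertibility of $\Psi^*\Psi$ via injectivity, make explicit what the paper takes for granted.
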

The purpose of Lemma~\ref{admlemma} is to show that any admissible attack in the frame domain is induced by a suitable attack in the spatial domain and vice versa.

\textbf{Threat model. }We operate in a white-box regime, where the adversary has access to the model, input data and gradients, and the loss function. In Section~\ref{results}, we also evaluate transfer-based black-box attacks, where adversarial examples crafted on a source model are evaluated on an unseen one. This combination assesses attack effectiveness, even beyond the source model, addressing \eqref{q1}. Finally, we introduce a structured, DGF-based perturbation set, whose geometry promotes SF-localized attacks.

\textbf{Proposed adversarial attack.}  We are inspired by \citet{zhang2024fourier} and \citet{duan2021advdrop}, who propose frequency-based adversarial attacks relying on orthogonal transforms. Similarly, we derive frequency-based adversarial attacks, yet differentiate our approach in the following ways: a) we operate in the DGF (non-orthogonal) domain, where the associated analysis operator possesses inherent structure and offers greater flexibility in the data representation, b) we introduce a maximization problem over a new set of perturbations, satisfying a weighted-DGF norm constraint. We employ a variant of projected gradient ascent w.r.t. that set and prove that the associated projection operator yields a solution to our maximization problem, leading to a principled attack methodology. By Lemma~\ref{admlemma}, the attack we derive in the spatial domain induces an attack in the frame domain. We detail our complete framework below.

Let a classification loss $\mathcal{L}(\cdot,\cdot)$, and choose a DGF-based analysis operator. Then, we transform any clean image $x\in\rn$ in the DGF domain: $z=\Psi x\in\mathbb{C}^N$. For $\varepsilon>0$, we propose to solve
\begin{equation}
    \label{percmax}
        \max_{w}\,\loss(\Psi^+(z+w),y),\quad\text{such that}\quad w=\Psi\delta,\,\|w\|_D\leq\varepsilon,
\end{equation}
where $w\in\mathbb{C}^N$ is an admissible frame attack of level $\varepsilon$ (cf. Definition~\ref{admissible}) w.r.t. $\|\cdot\|_D$ in $\mathbb{C}^N$, which we define to be a weighted Gabor-based norm:
\begin{equation}
    \label{perc1}
    \|w\|_D=\sqrt{w^*Dw}.
\end{equation}
$D\in\mathbb{R}^{N\times N}$ is a diagonal weighting matrix that constitutes a function of the average coherence \eqref{averagecoh}:
\begin{equation}
    \label{percmetric}
    d_i=\left(\tau+(N-1)^{-1}{\Bigg|\sum_{j=1}^N\langle\psi_i,\psi_j\rangle\Bigg|}\right)^{-1},\qquad i=1,\dots,N,\ \tau>0.
\end{equation}
Now, we rewrite \eqref{perc1} as $\|w\|_D=\|\Psi\delta\|_D=\sqrt{\delta^TM_D\delta}$, where
\begin{equation}
\label{mmatrix}
    M_D=\Psi^*D\Psi\in\cnn.
\end{equation}
\textbf{Why is \eqref{percmetric} important?} $D$ should reflect frame geometry in a similar manner to the Grammian operator (cf. Definition~\ref{frameop}). By defining $D$ as a regularized inverse proxy of the average coherence, we perform downweighting -- in a positive-definite and stable fashion due to the sufficiently small $\tau>0$ -- to prevent redundant frame elements from dominating the attack level, thereby enforcing a perturbation norm that better reflects intrinsic signal-space complexity.
\begin{figure}[t!]
    \centering
    \begin{subfigure}[h]{0.4\textwidth}
        \centering
        \includegraphics[width=0.8\textwidth]{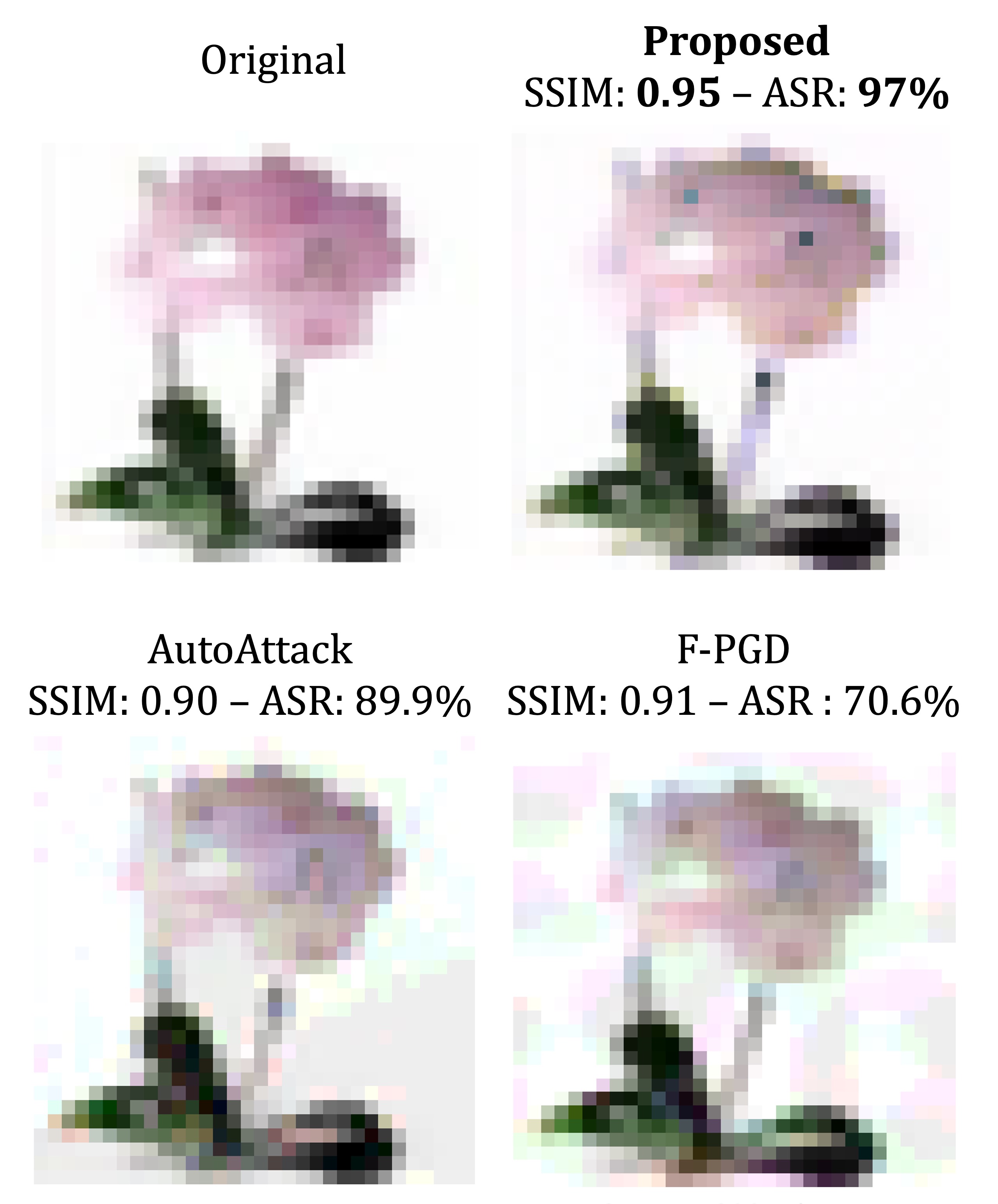}
        \caption{Example \textbf{CIFAR100} adversarial images, with \textbf{SSIM} quality and \textbf{ASR} attack efficiency values. Bold letters indicate the method with the best performance in terms of both evaluation metrics.}
        \label{flowers}
    \end{subfigure}%
    \hfill
    \begin{subfigure}[h]{0.5\textwidth}
    \centering
    \includegraphics[width=0.85\textwidth]{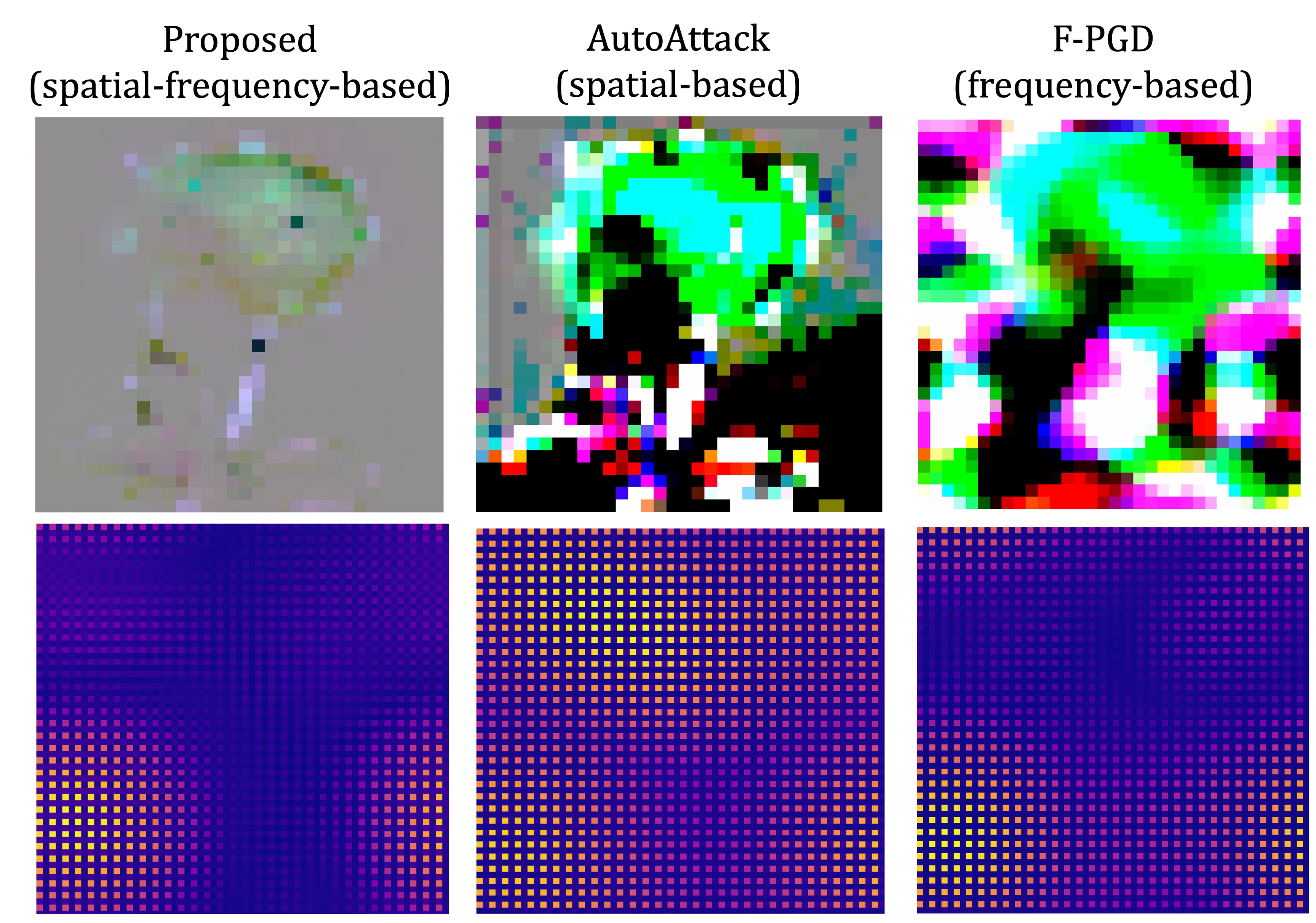}
    \caption{\textbf{Spatial (top)} and \textbf{frequency (bottom)} representation of the \textbf{proposed attack} and the two \textbf{baselines}, corresponding to the adversarial examples of Fig.~\ref{flowers}. Our attack adapts to the SF patterns arising in the images, taking advantage of spectral biases that persist across different ambient dimensions (cf. Fig.~\ref{spectrashampoos}), thereby outperforming the baselines, which lack a similar double localization.}
    \label{spectraflowers}
    \end{subfigure}
    
    \caption{\footnotesize Evaluation of the baselines ($\varepsilon=16/255$) and our attack ($\edgf=\eta\varepsilon$), on an exemplary RobustBench model \cite{chen2024data}.}
    \label{cifar100}
\end{figure}

By Remark~\ref{expansion}, $\Psi$ allows for a lossless expansion, so we get $\delta=\Psi^+w$. Since $w$ is an admissible frame attack, by Lemma~\ref{admlemma}, $\delta$ is a spatial attack, and we can easily move our optimization problem to the spatial domain; thus, \eqref{percmax} is equivalently written as 
\begin{equation}
    \begin{split}
    \label{space}
        \max_\delta&\,\loss(x+\delta,y)\quad\text{such that}\quad\delta^TM_D\delta\leq\varepsilon^2.
    \end{split}
\end{equation}
For $M_D$ as in \eqref{mmatrix}, we introduce the perturbation set
    \begin{equation}
        \label{c1}
        \pertset_{M_D}:=\{\delta\in\rn\,\mid\,\delta^TM_D\delta\leq\varepsilon^2\}.
    \end{equation}
To solve \eqref{space}, we rely on projected gradient ascent, with direction dictated by $\pertset_{M_D}$. Thus, we obtain (Proof in Appendix~\ref{pga})
\vspace{-5pt}
\begin{equation}
    \label{framepgd}
    \delta^{k+1}=\Pi_{\pertset_{M_D}}\bigg(\delta^k+\gamma\frac{(\Re(M_D))^{-1}\Tilde{g}^k}{\sqrt{(\Tilde{g}^k)^T(\Re(M_D))^{-1}\Tilde{g}^k}}\bigg),
\end{equation}
where $\Tilde{g}^k=\nabla_\delta\loss(\tilx+\delta^k)$, and $\Pi_{\pertset_{M_D}}(\cdot)$ is the projection operator on $\pertset_{M_D}$ w.r.t. the $\ell_2$-norm, given below.
\begin{proposition}
    [Proof in Appendix~\ref{projappen}]
\label{projection}
For $\Psi\in\fNn$ being the analysis operator associated to a DGF and $M_D=\Psi^*D\Psi$, the projection operator on $\pertset_{M_D}$ defined in \eqref{c1}, w.r.t. the $\ell_2$-norm, is given for any $\delta\in\rn$ as
\vspace{-5pt}
\begin{align}
\label{pidelta}
    \Pi_{\pertset_{M_D}}(\delta)=\arg\min_{\delta'\in\pertset_{M_D}}\|\delta'-\delta\|_2
    =\begin{cases}
\delta, &\delta^T M_D \delta \leq \varepsilon^2 \\
(I + 2\lambda\Re(M_D))^{-1}\delta, & \delta^T M_D \delta > \varepsilon^2
\end{cases},
\end{align}
for a uniquely determined $\lambda=\lambda(\varepsilon)>0$.
\end{proposition}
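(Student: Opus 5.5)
The first step is to reduce everything to a real quadratic form. For $\delta\in\rn$, the quantity $\delta^T M_D\delta$ is real: $D$ is real diagonal with positive entries, so $M_D=\Psi^*D\Psi$ is Hermitian. Its imaginary part is therefore antisymmetric and vanishes in the quadratic form, which gives $\delta^TM_D\delta=\delta^T\Re(M_D)\delta$. Set $A:=\Re(M_D)\in\Rnn$. It is symmetric, and it is positive definite because $\delta^TA\delta=\|\Psi\delta\|_D^2=\sum_i d_i|(\Psi\delta)_i|^2>0$ for $\delta\neq0$. This uses $d_i\ge(\tau+\max_i\cdot)^{-1}>0$ and the injectivity of $\Psi$ from Remark~\ref{expansion}. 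It follows that $\pertset_{M_D}=\{\delta':\delta'^TA\delta'\le\varepsilon^2\}$ is a nondegenerate ellipsoid: closed, convex, and with nonempty interior. Since $\delta'\mapsto\|\delta'-\delta\|_2^2$ is strictly convex, the Euclidean projection exists and is unique.

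Next I would split into two cases. If $\delta^TA\delta\le\varepsilon^2$, then $\delta$ is feasible and is its own projection. Otherwise I write the problem as $\min\|\delta'-\delta\|_2^2$ subject to $\delta'^TA\delta'\le\varepsilon^2$. Slater's condition holds (take $\delta'=0$), so the KKT conditions are necessary and sufficient. Stationarity of the Lagrangian $\|\delta'-\delta\|_2^2+\lambda(\delta'^TA\delta'-\varepsilon^2)$ reads $2(\delta'-\delta)+2\lambda A\delta'=0$. Since $I+\lambda A\succ0$ for $\lambda\ge0$, this gives $\delta'=(I+\lambda A)^{-1}\delta$. The statement's normalization $(I+2\lambda A)^{-1}$ is obtained by writing the constraint term as $\lambda(\cdot)$ against $\tfrac12\|\cdot\|^2$, or equivalently by rescaling $\lambda$. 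Because $\delta$ is infeasible, $\lambda=0$ is excluded: it would give $\delta'=\delta$. Complementary slackness then forces the constraint to be active, $\delta'^TA\delta'=\varepsilon^2$, with $\lambda>0$.

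The remaining and main step is to show that $\lambda$ exists and is unique. I would diagonalize $A=Q\Lambda Q^T$ with eigenvalues $a_j>0$, and set $c=Q^T\delta$. Define
\begin{equation*}
\phi(\lambda)=\delta^T(I+2\lambda A)^{-1}A(I+2\lambda A)^{-1}\delta=\sum_{j}\frac{a_jc_j^2}{(1+2\lambda a_j)^2}.
\end{equation*}
This function is continuous on $[0,\infty)$. It is strictly decreasing because some $c_j\neq0$, which holds since $\phi(0)=\delta^TA\delta>\varepsilon^2>0$. It tends to $0$ as $\lambda\to\infty$. By the intermediate value theorem and strict monotonicity, there is exactly one $\lambda(\varepsilon)>0$ with $\phi(\lambda)=\varepsilon^2$. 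By KKT sufficiency under convexity, the resulting point $(I+2\lambda A)^{-1}\delta$ is the unique minimizer, which yields \eqref{pidelta}.

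The only delicate points are the passage from $M_D$ to $\Re(M_D)$ and the positive definiteness of $\Re(M_D)$. Both rely on the Hermitian structure and the injectivity of $\Psi$, and I would state them carefully.
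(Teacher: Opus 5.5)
Your proposal is correct and follows essentially the same route as the paper's proof: KKT stationarity gives $(I+2\lambda\Re(M_D))^{-1}\delta$, complementary slackness handles the case split, and $\lambda$ is shown to be unique via the strictly decreasing secular function $\phi$ after diagonalizing $\Re(M_D)$. Your extra justifications fill in points the paper only asserts: the Hermitian argument for $\delta^TM_D\delta=\delta^T\Re(M_D)\delta$, positive definiteness from $d_i>0$ and injectivity of $\Psi$, and Slater's condition for KKT necessity and sufficiency.
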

Due to \eqref{framepgd} and \eqref{pidelta}, we can also write
\begin{equation}
    \delta^{k+1}=\begin{cases}
    \widetilde{\delta^k}
 & \text{if }\widetilde{ \delta^k}\in\pertset_{M_D}\\
(I + 2\lambda\Re(M_D))^{-1}\cdot\widetilde{ \delta^k} & \text{if }\widetilde{ \delta^k}\notin\pertset_{M_D},
\end{cases}\qquad k=0,\dots,K-1,
\end{equation}
where $\widetilde{ \delta^k}=\delta^k+\gamma(\Re(M_D))^{-1}\Tilde{g}^k\cdot\Big(\sqrt{(\Tilde{g}^k)^T(\Re(M_D))^{-1}\Tilde{g}^k}\Big)^{-1/2}$. The solution $\delta_\mathrm{opt}=\delta^K$ to \eqref{space} is obtained after $K\in\mathbb{N}$ iterations of \eqref{framepgd}, 
and the adversarial image is $\xadv=\tilx+\delta_\mathrm{opt}$, appropriately clipped to fall within $[0,1]$. Overall, our proposed attack is given in Algorithm~\ref{dgfpgd}.

\textbf{Why is Proposition~\ref{projection} important?} Our main result gives a convincing partial answer to \eqref{q1} via the induced geometry of $\pertset_{M_D}$. By optimizing over $\pertset_{M_D}$, we enable attacks to align with structured SF components and exploit model sensitivities, while remaining distributed across correlated directions through frame overcompleteness, thus avoiding overfitting to pixel-space features, and improving both effectiveness on the source model and transferability to unseen models; we empirically validate this in Section~\ref{results}. The inherent overcompleteness of frames also suggests our framework could generalize to other structured transform families, which we leave as future work (cf. Section~\ref{conclusion}).
\begin{table}[t!]
\centering
\scalebox{0.7}{\begin{tabular}{lcccc}
\toprule
& \multicolumn{2}{c}{ImageNet} & \multicolumn{2}{c}{CIFAR100} \\
\cmidrule(lr){2-3} \cmidrule(lr){4-5}
Attack & Pretrained & RobustBench & Pretrained & RobustBench \\
\midrule
\textbf{Proposed} (spatial-frequency-based) & \textbf{74.5} & \textbf{63.4} & \textbf{96.5} & \textbf{85.1} \\
AutoAttack (spatial-based) & 33.6          & 30.9          & 90.1          & 44.4          \\
F-PGD (frequency-based) & 32.8          & 32.1          & 74.2          & 46.0          \\
\bottomrule
\end{tabular}}
\caption{Average \textbf{ASR} (in \%) across source-target model pairs, computed from the off-diagonal entries of the heatmaps of Fig.~\ref{transferability_imagenet_heatmaps}. \textbf{Bold letters indicate better cross-model transferability}. Results support our methodological contribution: attacks derived from a principled optimization framework, based on structured overcomplete transforms that align more naturally with data geometry, can exploit shared model vulnerabilities more effectively than purely spatial-/frequency-based methods.}
\label{mtasr}
\end{table}
\section{Experiments}
\label{exp}
We complement our principled methodology for transform-based attacks with an empirical evaluation, designed to assess its effectiveness and transferability in practice. Section~\ref{settings} features representative experimental settings across diverse pretrained and robust models, and standardized datasets; in Section~\ref{results}, we discuss corresponding results (and refer to Appendix~\ref{expappen} for more experimental details). While not intended as an exhaustive benchmark, these experiments support our theoretical motivation and lay the groundwork for broader empirical investigation (cf. Section~\ref{conclusion}).

\begin{figure}[t!]
    \centering
\begin{subfigure}[h]{0.9\textwidth}
        \centering
        \includegraphics[width=0.85\textwidth]{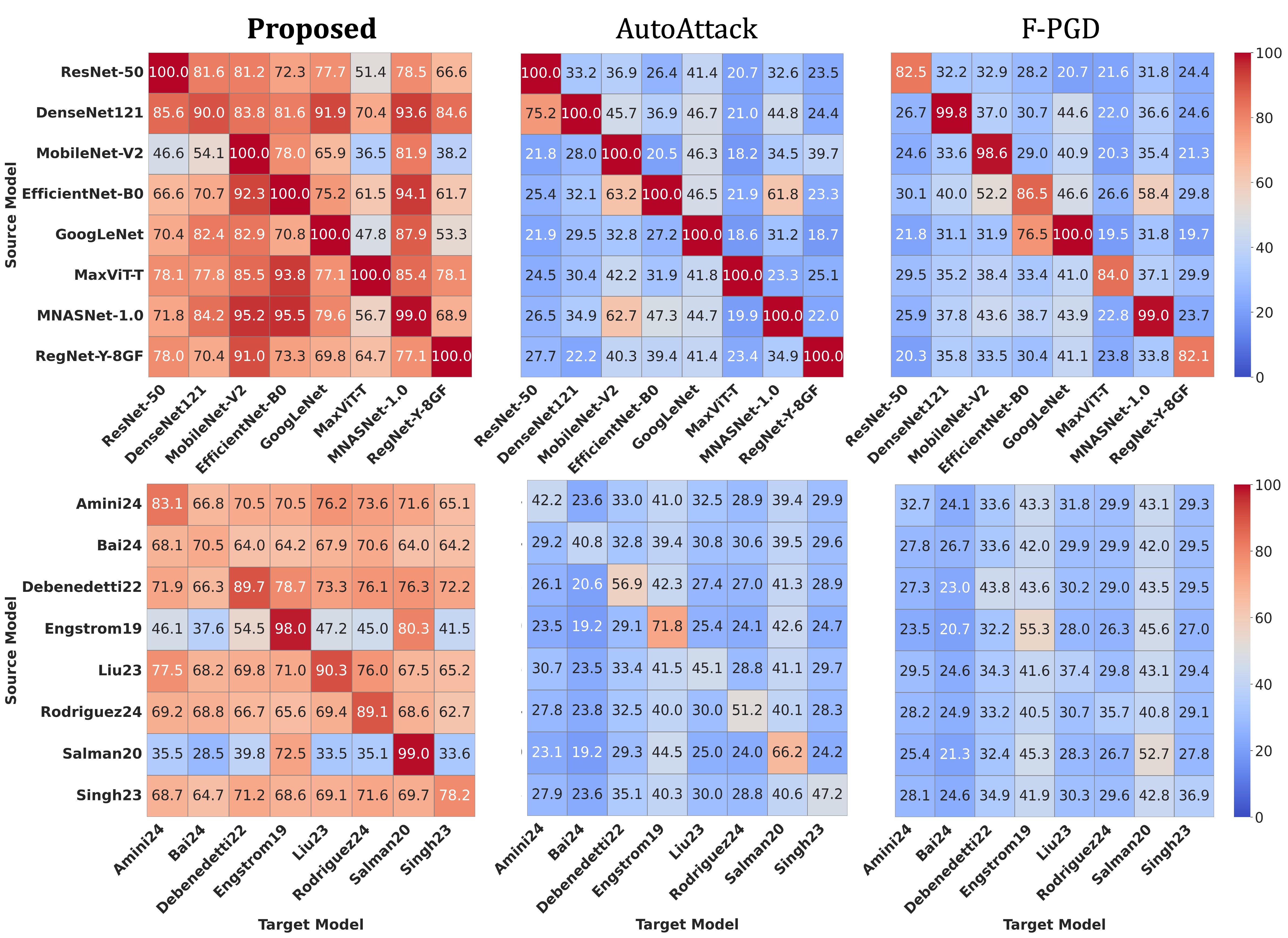}
        \caption{\textbf{ImageNet}, with $\varepsilon=4/255$ for the baselines and $\edgf=\eta4/255$ for our attack, on \textbf{pretrained (top)} and \textbf{RobustBench (bottom)} models. \textbf{Left to right: proposed attack, AutoAttack, F-PGD}.}
        \label{imagenet_trans}
    \end{subfigure}%
    
    \begin{subfigure}[h]{0.9\textwidth}
    \centering
    \includegraphics[width=0.85\textwidth]{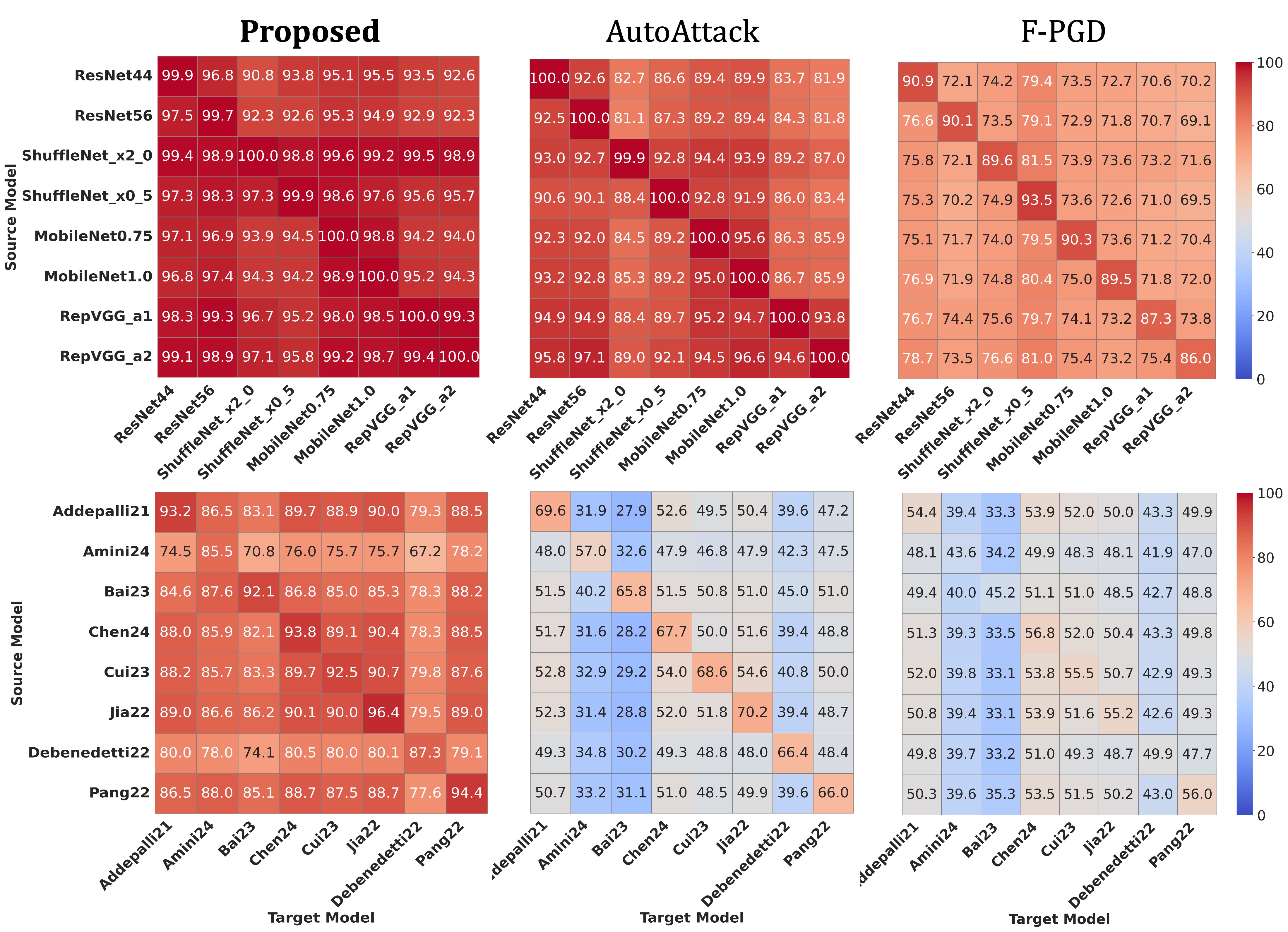}
    \caption{\textbf{CIFAR100}, with $\varepsilon=8/255$ for the baselines and $\edgf=\eta8/255$ for our attack, on \textbf{pretrained (top)} and \textbf{RobustBench (bottom)} models. \textbf{Left to right: proposed attack, AutoAttack, F-PGD}.}
    \label{cifar_trans}
\end{subfigure}
    \caption{\footnotesize \textbf{ASR} (in $\%$) for both datasets, for all attacks, evaluated on different models. \textbf{Color intensity indicates higher transferability}. Each row represents a different source model, based on which an attack is created. We observe that our proposed attack enjoys a high transferability across a variety of models, and outperforms the baselines, thereby highlighting the importance of deriving attacks from a principled methodology.}   \label{transferability_imagenet_heatmaps}
\end{figure}

\subsection{Settings}
\label{settings}
\textbf{Datasets, models \& baselines.} We evaluate our attack on $224\times224$ and $32\times32$ coloured images from ImageNet \cite{deng2009imagenet} and CIFAR100 \cite{krizhevsky2009learning}, respectively, on SotA standard pretrained \cite{chenyaofo_pytorch_cifar_models,pytorch}, and on adversarially robust models taken from the standardized benchmark of RobustBench \cite{croce2020robustbench}. Both datasets are typical benchmarks for studying robustness across different data resolutions and complexities, and model families, making them suitable candidates for assessing the behavior of our proposed methodology. We select RobustBench models having undergone both adversarial training and more elaborate defenses, i.e., sparsity-promoting \cite{amini2024meansparse} and stochastic smoothing strategies \cite{bai2024mixednuts}, and regularized adversarial training \cite{bai2024improving}, as a means of examining the performance of our attack in more advanced settings. We compare our attack to SotA baselines, being indicative of attack classes: the spatial-based AutoAttack \cite{croce2020reliable}, and the frequency-based Fourier PGD (F-PGD) \cite{zhang2024fourier}, both constrained in the $\ell_\infty$-norm ball. Since our attack results as a solution to a projection on $\pertset_{M_D}$, we aim to see how the structure induced by this closed-form solution relates to less structured attacks. For both baselines, we set the best hyperparameters proposed by the associated papers.

\textbf{Evaluation metrics \& attack parameters.} We examine varying attack levels $\varepsilon$, and measure attack effectiveness via the common attack success rate (ASR), i.e., the rate of adversarial examples that successfully fooled the model; the higher the ASR, the more vulnerable a model is. We examine visual quality via the standard structural similarity index measure (SSIM) \cite{wang2004image}, ranging in $[0,1]$; the higher the SSIM, the better the visual quality is. The perturbation set $\pertset_{M_D}$ of \eqref{c1} is an ellipsoid, not a ball. To enable a fair baseline comparison, a common practice hinges on comparing volumes of perturbation sets \cite{araujo2020advocating}, i.e., we require the ball volume (controlled by $\varepsilon)$ to be equal to the ellipsoid volume (controlled by an attack level we denote as $\edgf$) and solve for $\edgf$. Then, it can be proven that $\edgf=\eta\varepsilon$, where $\eta=\sqrt{2n/(\pi e)}(\pi n\mathrm{det}(M_D))^{1/2n}$.
\subsection{Results \& discussion}
\label{results}
\vspace{-2pt}
\textbf{The effect of $\pertset_{M_D}$ (and Gabor frames) on effectiveness and transferability.} We evaluate the effectiveness of the proposed attack on pretrained and RobustBench models of ImageNet and CIFAR100, with fixed $\edgf=\eta4/255$ and $\edgf=\eta8/255$, respectively, and report the results in the diagonal of the left heatmaps of Fig.~\ref{imagenet_trans} and Fig.~\ref{cifar_trans}, respectively, with varying architectures, e.g., from CNN to Transformers. As illustrated in the first column of Fig.~\ref{transferability_imagenet_heatmaps}, the proposed Gabor-based attack achieves nearly ideal ASR on the pretrained models, and consistently high ASR on the RobustBench models. This strong effectiveness can be owed to the structure of $\pertset_{M_D}$ \eqref{c1}, which departs fundamentally from $\ell_p$ balls. By encoding the geometry of the underlying DGF, $\pertset_{M_D}$ enables the attack to distribute energy across SF directions, being highly relevant to the models' internal representations. Overall results confirm that designing an optimization problem, endowed with a transform-aware perturbation constraint set, is a key factor for a frequency-based attack to be highly effective on different architectures, extending naturally from small- to large-scale datasets. 

The transferability of our attack is also reflected in the off-diagonal entries of the left heatmaps of Fig.~\ref{imagenet_trans} (ImageNet) and \ref{cifar_trans} (CIFAR100). For the sake of completeness, in Table~\ref{mtasr}, we also summarize the average off-diagonal ASR of Fig.~\ref{transferability_imagenet_heatmaps} -- a common practice when examining cross-model effectiveness \cite{yang2023improving,wei2022towards,wang2024boosting}. An adversarial attack is transferable when generated on a source model, yet successfully deceives unseen target models. We observe that our attack enjoys a high transferability rate, across most of the source–target model combinations, with a mild reduction for the RobustBench models. For both datasets, color intensity variations indicate that our attack generalizes beyond the adversarial defense used for each source model, even under more advanced schemes than standard adversarial training. We attribute this effect to the representation power of DGFs (cf. Section~\ref{back}), to handle SF patterns that naturally arise in images, so the attacks greatly exploit shared spectral biases of different models; this enhances transferability. Therefore, with a DGF-based attack, we hit models at their core: their sensitivity to SF-like features, being essential for effective feature extraction, becomes their weakness. All in all, Table~\ref{mtasr}, and Fig.~\ref{imagenet_trans} and \ref{cifar_trans}, demonstrate strong cross-model transferability, under both pretrained and robust regimes.

\textbf{Baseline comparisons (and a note on visual fidelity).} We compare the effectiveness and transferability of our attack against AutoAttack and F-PGD, on pretrained and RobustBench models, and report the results in the heatmaps of Fig.~\ref{imagenet_trans} (ImageNet) and  Fig.~\ref{cifar_trans} (CIFAR100), with $\varepsilon=4/255$ and $\varepsilon=8/255$, respectively, for the baselines, and $\edgf=\eta\varepsilon$ for our attack. For clarity, we also present the averaged off-diagonal ASR of each heatmap in Table~\ref{mtasr}. We observe that the proposed attack achieves an almost identical self-model ASR to AutoAttack, which solves \eqref{maxattack}, thus confirming our motivation for designing the optimization problem \eqref{space} to derive SF attacks. In contrast, F-PGD follows the formulation of \eqref{freqattack} - \eqref{maskgrad}, without resulting from an optimization problem. Our attack consistently outperforms the baselines by means of source-target ASR, across pretrained and robust models alike, implying higher transferability, even against strongly defended RobustBench models. Overall results demonstrate that structured, overcomplete SF transforms, by aligning more naturally with data geometry, enable attacks to capture shared spectral sensitivities of architectures, more effectively than purely spatial-/frequency-based attacks. These findings also pinpoint to a concrete path for creating more robust models: incorporate transform-domain priors, enjoying additional structure, in adversarial training. For instance, our attack is a solution to \eqref{space}, so the latter can naturally replace the standard $\ell_p$-constrained inner maximization in adversarial training, to yield $\min_\theta\max_{\delta\in\mathcal{C}_{M_D}}\,\mathcal{L}(x+\delta,y;\theta)$, for  parameters $\theta$; we leave this strategy for future work (cf. Section~\ref{conclusion}).

We also examine visual quality, via SSIM, of adversarial samples of RobustBench models, illustrated in Fig.~\ref{shampoos} (ImageNet) and \ref{flowers} (CIFAR100), with $\varepsilon=8/255$ and $\varepsilon=16/255$, respectively, for the baselines, and $\edgf=\eta\varepsilon$ for our attack; we visualize all three attacks in the spatial and frequency domain, in Fig.~\ref{spectrashampoos} and \ref{spectraflowers}, respectively. While all methods retain a reasonable visual quality, our attack outperforms the baselines, in terms of ASR and SSIM. We observe that AutoAttack leads to pixel-aligned variations, reflecting the $\ell_\infty$-norm geometry, while F-PGD produces smoother spatial distortions, with energy concentrated in specific frequencies. On the contrary, our method distributes energy along structured SF orientations, confirming our motivation for designing a perturbation set to reflect the DGF geometry and overcompleteness. More broadly, Fig.~\ref{imagenet} and \ref{cifar100} showcase that attacks resulting from a maximization problem, with a constraint set relying on Gabor frames being closely related to human perception (cf. Section~\ref{back}), can be highly effective, while preserving visual fidelity.

\section{Conclusion \& future work}
\label{conclusion}
In this paper, we introduced a principled methodology for explaining the success of frequency-based adversarial attacks. Our analysis relied on introducing an optimization problem over a newly-defined perturbation set, incorporating structured, non-orthogonal transforms, allowing for flexible data representation. We proved that the solution to the optimization problem is acquired through a projection on the aforesaid set, resulting in an adversarial attack of increased effectiveness and transferability, as supported by relevant experiments. Particularly, our proposed method exploited model vulnerabilities beyond traditional high-/low-frequency assumptions, highlighting the pivotal role of transform geometry in adversarial robustness. Due to its grounded justification and preliminary empirical success, our work opens promising future directions. For instance, it would be intriguing to develop a larger empirical evaluation, with more datasets, attacking settings, and different reweighting strategies for the DGF, as well as couple our method with adversarial training, to establish a complete benchmark for transform-aware defenses that align with perceptual and representational structure. Furthermore, it would be fruitful to extend our framework to other families of structured transforms, e.g., those of learnable adaptive transforms \cite{mairal2008supervised}, and examine how these transforms could relate to shared spectral biases of diverse architectures.

\bibliography{ref}


\begin{appendices}

\section{Details of Section~\ref{back}}
\label{mainappen}
We present below the mathematical definition of a frame.
\begin{definition}[\cite{casazza2012finite}]
    A set $(\psi_i)_{i=1}^N$ of $N$ vectors in $\fn$, $N\geq n$, is a frame for $\fn$ if and only if it is a spanning set for $\fn$. Equivalently, if the inequalities
    \begin{equation}\label{frameineq}
    A\|x\|_2^2\leq\sum_{i=1}^N\lvert\langle x,\psi_i\rangle\rvert^2\leq B\|x\|_2^2
\end{equation}
hold for all $x\in\fn$, for some $0<A\leq B<\infty$ (frame bounds), then $(\psi_i)_{i\in N}$ is a frame for $\fn$.
\end{definition}

While frames are usually studied in the 1D case, e.g., for $x\in\rn$, they can be naturally extended to 2D data, by relying on tensor algebra of frames, since the tensor product of two frames is a frame itself \cite{paukner2007foundations}. As a result, for $x\in\Rnn$ and a DGF with associated analysis operator $\Psi\in\cNn$ as in Definition~\ref{dgf}, we have
\begin{equation}
    (\Psi\otimes\Psi)\mathrm{vec}(x)=\Psi x\Psi^T=w\in\mathbb{C}^{N\times N},
\end{equation}
with $\Psi\otimes\Psi$ being the analysis operator associated to a DGF in $\cNn\otimes\cNn\cong\mathbb{C}^{N^2\times n^2}$. In this case, we call $\Psi$ a 2D separable DGF analysis operator. Similarly to Remark~\ref{expansion}, 2D frames enable lossless expansions \cite{paukner2007foundations}:
\begin{equation}
    x=\Psi^+w(\Psi^+)^T,
\end{equation}
for $\Psi^+\in\mathbb{C}^{n\times N}$ being the pseudo-inverse of $\Psi\in\cNn$.

Based on the afore-described framework, we can readily extend the methodology we propose in Section~\ref{main}, to account for images $x\in\Rnn$, with adversarial attacks $\delta\in\Rnn$, so $w=\Psi\delta\Psi^T\in\mathbb{C}^{N\times n}$ is an admissible frame attack, due to Lemma~\ref{admlemma}; we give more details in Appendix~\ref{2dcase}.

\section{Proofs of Section~\ref{main}}
\label{proofappen}
\subsection{Proof of Lemma~\ref{admlemma}}
\label{admappen}
\begin{proof}
For the first bullet-point, let us assume that $\delta$ is a spatial attack of level $\varepsilon$ in $\fn$, so that $\|\delta\|_{\fn}\leq\varepsilon$. Then, for $w=\Psi\delta$, we have
\begin{equation*}
    \|w\|_{\fN}=\|\Psi\delta\|_{\fN}\leq\|\Psi\|_{\mathrm{op}}\varepsilon.
\end{equation*}
By definition of $\Psi$, we conclude that $w$ is an admissible frame attack of level $\|\Psi\|_{\mathrm{op}}\varepsilon$.\\
For the second bullet-point, let us suppose that $w=\Psi\delta$ is an admissible frame attack of level $\varepsilon$, so it holds $\|w\|_{\fN}\leq\varepsilon$. Then, due to the lossless expansion offered by the frame, we get $\|\delta\|_{\fn}=\|\Psi^+w\|_{\fn}\leq\|\Psi^+\|_{\mathrm{op}}\varepsilon$. The proof is complete.
\end{proof}

\begin{figure}[t!]
    \centering
    \includegraphics[width=.9\textwidth]{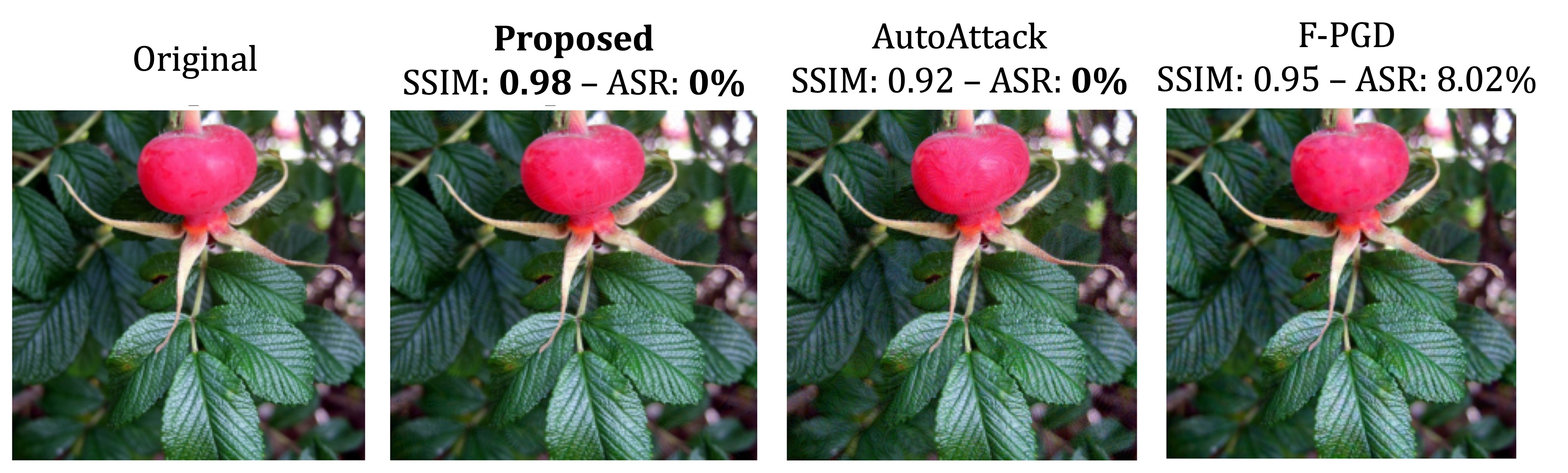}
    \includegraphics[width=.9\textwidth]{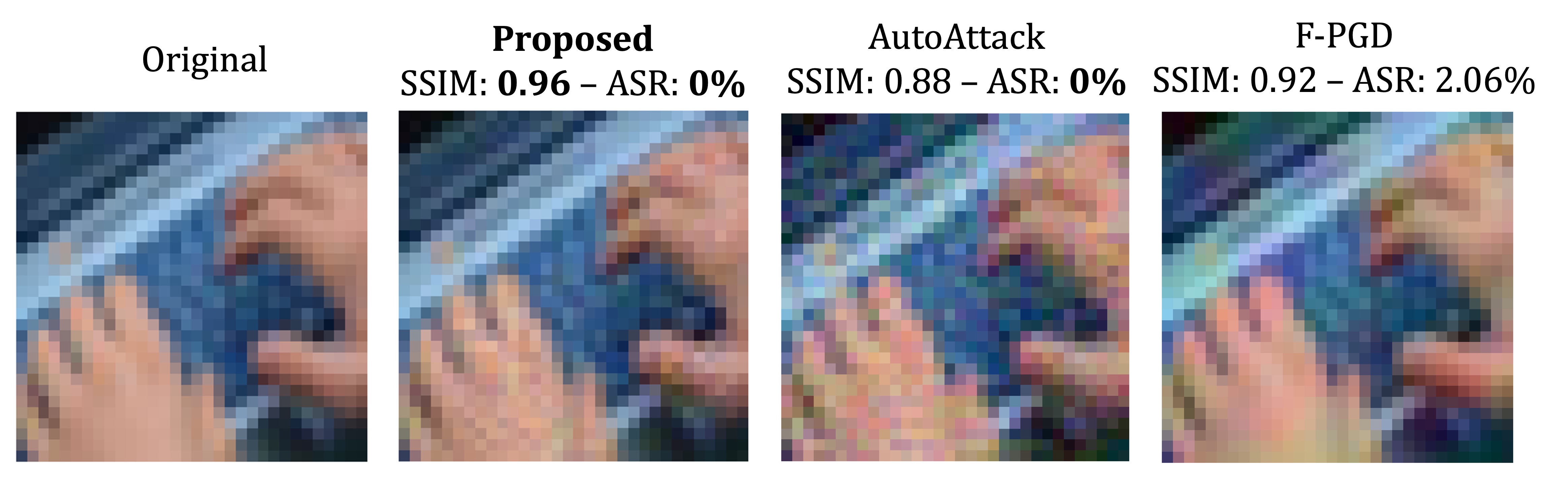}
    \caption{Examples of images and their adversarial counterparts, with \textbf{SSIM} quality and \textbf{ASR} attack efficiency values, evaluated on a pretrained version of RegNet for \textbf{ImageNet (top)} and on a pretrained version of ResNet for \textbf{CIFAR100 (bottom)}, for $\varepsilon=8/255$ and $\varepsilon=16/255$, respectively, on the baselines, and $\edgf=\eta8/255$ $\edgf=\eta16/255$, respectively, on our attack. Bold letters indicate the method with the best performance in terms of both evaluation metrics.}
    \label{basefigkeyboards}
\end{figure}

\subsection{Proof of derivation of \eqref{framepgd}}
\label{pga}
\begin{proof}
    We aim to solve \eqref{space} by a variant of projected gradient ascent. To that end, we seek the steepest ascent direction under the geometry induced by the perturbation set \eqref{c1}. At iteration $k$, we set $\Tilde{g}^k=\nabla_\delta\loss(\tilx+\delta^k,y)$ and solve
\begin{equation}
\label{ascent}
    u^\star=\arg\max_{u\in\rn}(\Tilde{g}^k)^Tu\quad\text{such that}\quad u^TM_Du\leq1.
\end{equation}
By definition of \eqref{percmetric} and due to the injectivity of $\Psi$, $M_D$ is Hermitian and positive definite, and thus invertible. Moreover, it holds $u^TM_Du=u^T\Re(M_D)u$, for any $u\in\rn$. 

We set the Langrangian of \eqref{ascent}:
\begin{equation}
    L(u;\lambda)=(\Tilde{g}^k)^Tu - \lambda(u^T\Re(M_D)u-1),\qquad\lambda\geq0.
\end{equation}
By the KKT conditions, we have
\begin{equation}
    \nabla_uL(u;\lambda)=0\implies\Tilde{g}^k=2\lambda\Re(M_D)\implies u=\frac{(\Re(M_D))^{-1}\Tilde{g}^k}{2\lambda}.
\end{equation}
Imposing the active constraint $u^T\Re(M_D)u=1$ yields
\begin{equation}
    \frac{(\Tilde{g}^k)^T(\Re(M_D))^{-1}\Tilde{g}^k}{(2\lambda)^2}\implies2\lambda=\sqrt{(\Tilde{g}^k)^T(\Re(M_D))^{-1}\Tilde{g}^k}.
\end{equation}
Hence, the steepest ascent direction is given by
\begin{equation}
\label{direction}
    u^\star=\frac{(\Re(M_D))^{-1}\Tilde{g}^k}{\sqrt{(\Tilde{g}^k)^T(\Re(M_D))^{-1}\Tilde{g}^k}}.
\end{equation}
Then, deploying projected gradient ascent in the direction of \eqref{direction} yields \eqref{framepgd}, which complements the proof.
\end{proof}
\subsection{Proof of Proposition~\ref{projection}}
    \label{projappen}
    \begin{proof}
        Using standard optimization theory \cite{boyd2004convex}, we set a Langrangian
        \begin{align}
            L(\delta';\lambda)=&\frac{1}{2}\|\delta'-\delta\|_2^2+\lambda((\delta')^T M_D\delta' - \varepsilon^2)\notag\\
            =&\frac{1}{2}\|\delta'-\delta\|_2^2+\lambda((\delta')^T\Re( M_D)\delta' - \varepsilon^2),
        \end{align}
         for $\lambda\geq0$. By KKT conditions, we get
        \begin{align}
            \nabla_{\delta'}L=0 &\implies(\delta'-\delta)+2\lambda \Re( M_D)\delta'=0\notag\\
            &\implies\delta'=(I+2\lambda \Re( M_D))^{-1}\delta,
        \end{align}
        for $(\delta')^T \Re( M_D)\delta'\leq \varepsilon^2$, $\lambda((\delta')^T \Re( M_D)\delta' - \varepsilon^2)=0$. Since $\Re(M_D)$ is invertible, so is $I+2\lambda\Re(M_D)$, for all $\lambda\geq0$. \\
        If $\delta\in\pertset_{M_D}$, then it satisfies the constraint and $\lambda=0$, yielding $\delta'=\delta$. If $\delta\notin\pertset_{M_D}$, then the constraint is active, $\lambda>0$ and $\delta'=(I+2\lambda\Re( M_D))^{-1}\delta$, with $\lambda$ chosen such that $(\delta')^T\Re(M_D)\delta'=\varepsilon^2$. Overall, we have
        \begin{align}
        \Pi_{\pertset_{M_D}}(\delta)
            &=\begin{cases}
        \delta & \text{if } \delta^T M_D \delta \leq \varepsilon^2 \\
        (I + 2\lambda\Re(M_D))^{-1}\delta & \text{if } \delta^T M_D \delta > \varepsilon^2,
        \end{cases}
        \end{align}
        which completes the first part of the proof.\\
        Due to the positive definitiveness of $M_D$ (and $\Re( M_D)$), the set $\pertset_{M_D}$ is non-degenerate and the projection is unique for all $\delta$. To compute $\lambda$, we diagonalize $\Re(M_D)$, so that $\Re(M_D)=U\Lambda U^T$, where $\Lambda$ is a diagonal matrix with entries $\mu_i>0$, $i=1,\dots,n$, and set $\zeta=U^T\delta$. We define
        \begin{equation}
            \phi(\lambda):=\sum_{i=1}^n\frac{\mu_i\zeta_i^2}{(1+2\lambda\mu_i)^2}-\varepsilon^2;
        \end{equation}
        it can be proven that $\phi$ is a strictly decreasing, continuous function. Moreover, $\phi(0)>0$ and
        \begin{equation}
            \phi(\lambda)\overset{\lambda\rightarrow\infty}{\longrightarrow}-\varepsilon^2<0.
        \end{equation}
        By the intermediate value theorem and strict monotonicity, there is a unique $\lambda^*>0$ satisfying $\phi(\lambda^*)=0$; we find $\lambda^*$ by solving $\phi(\lambda)=0$ using the bi-section method.
    \end{proof}

\begin{table}[t!]
\centering
\caption{Average \textbf{ASR} (in \%) across source-target model pairs, computed from the off-diagonal entries of the heatmaps of
Fig.~\ref{transferability_imagenet_heatmaps_double} and Fig.~\ref{transferability_cifar100_heatmaps_double}. \textbf{Bold letters indicate better cross-model transferability}.}
\label{mtasr_high_noise}
\scalebox{0.7}{\begin{tabular}{lcccc}
\toprule
& \multicolumn{2}{c}{ImageNet} & \multicolumn{2}{c}{CIFAR100} \\
\cmidrule(lr){2-3} \cmidrule(lr){4-5}
Attack & Pretrained & RobustBench & Pretrained & RobustBench \\
\midrule
\textbf{Proposed} (spatial-frequency-based) & \textbf{75.4} & \textbf{63.5} & \textbf{96.9} & \textbf{83.8} \\
AutoAttack (spatial-based) & 44.0          & 41.6          & 89.6          & 64.9          \\
F-PGD (frequency-based) & 42.7          & 39.7          & 77.6          & 58.5          \\
\bottomrule
\end{tabular}}
\end{table}

\subsection{Extension to a 2D setting}
\label{2dcase}
In Section~\ref{main}, our proposed methodology considers 1D data; this is essential, in order to scale to 2D data like the ImageNet and CIFAR100 coloured images, which we process batch- and channel-wise. As such, we follow the 2D extension we describe in Appendix~\ref{mainappen}. We capitalize all vectors to shift to matrix notation, use a 2D separable DGF analysis operator, and perform separable, i.e., row- and column-wise, weighting, using $D$ \eqref{percmetric}. By turning to a Frobenius $\|\cdot\|_F$-norm instead of the $\ell_2$-norm for \eqref{perc1}, we extend the latter in the 2D case as $\|\Psi \Delta\Psi^T\|_D^2=\|W\|_D^2=\mathrm{tr}(W^TDWD)$, or equivalently $\|W\|_D=\sqrt{\mathrm{vec}(W)^T(D\otimes D)\mathrm{vec}(W)}=\sqrt{\mathrm{vec}(\Delta)^T(M_D\otimes M_D)\mathrm{vec}(\Delta)}$, where $\mathrm{tr}(\cdot)$ denotes the trace of a matrix, and $M_D$ is as in \eqref{mmatrix}. Then, by slightly abusing notation and denoting the perturbation set for both 1D and 2D cases as $\pertset_{M_D}$, we obtain for \eqref{c1} and \eqref{framepgd}
\begin{align}
    \pertset_{M_D}&=\{\Delta\in\Rnn:\,\mathrm{tr}(\Delta^TM_D\Delta M_D)\leq\varepsilon^2\},\\
    \label{2dstep}\Delta^{k+1}&=\Pi_{\pertset_{M_D}}\bigg(\Delta^k+\gamma\frac{(\Re(M_D))^{-1}\widetilde{G}^k(\Re(M_D))^{-1}}{\sqrt{(\widetilde{G}^k)^T(\Re(M_D))^{-1}\widetilde{G}^k(\Re(M_D))^{-1}}}\bigg),
\end{align}
respectively, where $\widetilde{G}^k=\nabla_\Delta\loss(X+\Delta^k,Y)$, and in \eqref{2dstep} we have used the identity $(\Re(M_D)\otimes\Re(M_D))^{-1}=(\Re(M_D))^{-1}\otimes (\Re(M_D))^{-1}$. Then, the corresponding projection of Proposition~\ref{projection} is given by
\begin{align}
    \Pi_{\pertset_{M_D}}(\Delta)=\arg\min_{\Delta'\in\pertset_{M_D}}\|\Delta'-\Delta\|_F
    =\begin{cases}
\Delta, &\Delta\in\pertset_{M_D}\\
(I + 2\lambda\Re(M_D))^{-1}\Delta(I + 2\lambda\Re(M_D))^{-1}, & \Delta\notin\pertset_{M_D}
\end{cases},
\end{align}
which easily follows by mutatis mutandis  adapting the proof presented in Appendix~\ref{projappen}, based on the 2D separability of all associated matrices.

\begin{figure}[t!]
    \centering
    \includegraphics[width=\textwidth]{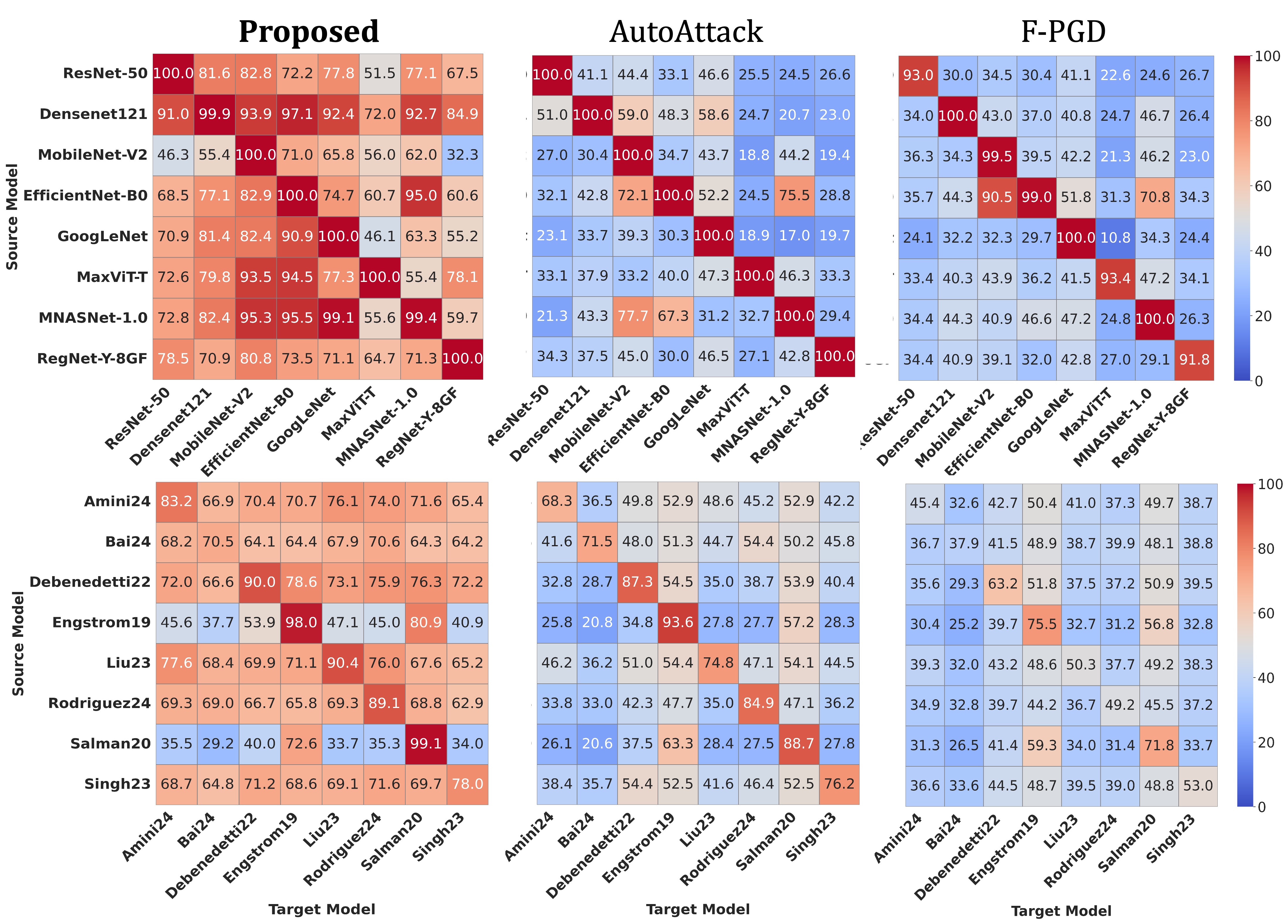}
    \caption{\footnotesize \textbf{ASR} (in $\%$) with $\varepsilon=8/255$ and $\edgf=\eta8/255$, on \textbf{ImageNet}, for different \textbf{pretrained (top)} and \textbf{RobustBench (bottom)} models. \textbf{Left to right: proposed attack, AutoAttack, F-PGD}. \textbf{Color intensity indicates higher transferability}. We observe that our proposed attack is consistently more transferable than the baselines, as we double the attack level (cf. Fig.~\ref{imagenet_trans}), across a variety of models. These results conform with our motivation deriving attacks from a principled methodology, based on structured transforms.}   \label{transferability_imagenet_heatmaps_double}
\end{figure}

\section{Additional experiments details \& results}
\label{expappen}
To construct the DGF, we need to use an adequate so-called window vector $g$ (cf. Definition~\ref{dgf}), on which we perform the SF shifts. We select the Hann window vector $g\in\rn$, defined as 
\begin{equation}
    g_j=0.5\left(1-\cos\left(\frac{2j\pi}{n-1}\right)\right),
\end{equation}
$j=0,\dots,n-1$, commonly used for image-processing applications. The DGF parameters determine the SF resolution of the representation, and the frame overcompleteness, ensuring $\alpha\beta<n$. Choosing $\alpha=1$ enables maximal spatial sampling, allowing the representation to capture fine localized structures, and $\beta=16$ (for CIFAR100) and $\beta=112$ (for ImageNet) provides a balanced frequency resolution, that preserves relevant spectral information. This configuration yields a stable, overcomplete representation capable of capturing structured SF patterns, being central to our perturbation set design. We also set $\tau=10^{-6}$ for the constant ensuring the positive-definitiveness of the reweighting matrix $D$ defined in \eqref{percmetric}. We also calculate the scaling factor $\eta$ separately for each dataset, as it is dimension-dependent, due to the appearance of $M_D$.

\begin{figure}[t!]
    \centering
    \includegraphics[width=\textwidth]{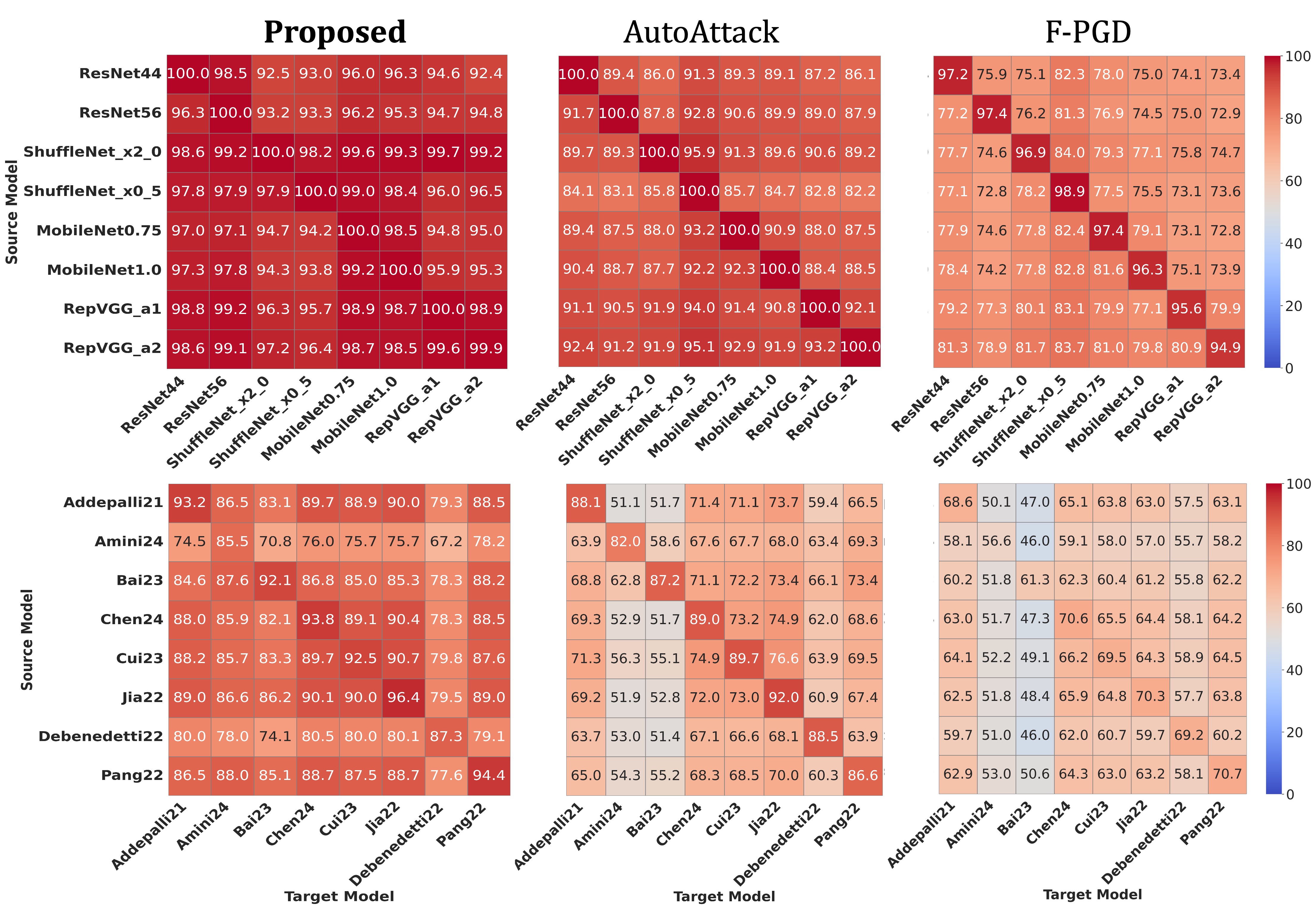}
    \caption{\footnotesize \textbf{ASR} ($\%$) on \textbf{CIFAR100}, with $\varepsilon=16/255$ for the baselines and $\edgf=\eta\varepsilon$ for the proposed attack, on different \textbf{pretrained (top)} and \textbf{RobustBench (bottom)} models. \textbf{Color intensity indicates higher transferability}. We notice that \textbf{our attack (left)} is highly transferable, even more than on ImageNet (cf. Fig.~\ref{transferability_imagenet_heatmaps_double}), outperforming the \textbf{baselines (middle and right)}, for both pretrained and robust models.}
    \label{transferability_cifar100_heatmaps_double}
\end{figure}

All experiments are implemented in PyTorch \cite{pytorch}, with a batch size 8 for ImageNet and 32 for CIFAR100, and conducted on two high-performance computing clusters, using NVIDIA A100 (40GB VRAM) and H100 (80GB VRAM) GPUs.  Under these configurations, the average execution time was consistently under 20 minutes. An exception was the transferability analysis of AutoAttack, which required exceedingly more computational time, which is natural, by definition of the AutoAttack's scheme. While AutoAttack has a predefined number of iterations for each of the attacks involved in its design (e.g., it may be $K=100$), we set the number of iterations to $K=20$ for both our proposed attack and the F-PGD, to enable a fair comparison. Relative to F-PGD, the additional cost of the proposed attack arises from the implementation of $\Re(M_D)$, which requires applying its inverse on the gradient of the loss and projecting onto the perturbation set at each iteration. All computations exploit the separable structure of the transforms and are implemented via forward and adjoint operations. The projection step reduces to applying $(I+2\lambda\Re(M_D))^{-1}$ from the left and right via structured linear solves. The scalar parameter $\lambda$ is computed using a bisection procedure, which has logarithmic cost and introduces only a negligible overhead compared to the cost of the transform operations. In contrast, AutoAttack combines multiple iterative attacks and requires substantially more gradient evaluations, leading to significantly higher runtime. In practice, the proposed attack incurs a moderate overhead compared to F-PGD due to the additional structured solves, but remains considerably more efficient than AutoAttack. Overall, it provides a favorable trade-off between computational cost and attack performance. Upon acceptance, our code will become publicly available in a corresponding Github repository, to support open science practices. For experimental completeness, we present below complementary experiments to those of Section~\ref{results}.

Similarly to Fig.~\ref{imagenet_trans} and Fig.~\ref{cifar_trans}, we present transferability heatmaps, serving as comparisons between the proposed attack and the baselines, on pretrained and RobustBench models. We illustrate the results in Fig.~\ref{transferability_imagenet_heatmaps_double} for ImageNet, with $\varepsilon=8/255$ for the baselines and $\edgf=\eta\varepsilon$ for our attack, and in Fig.~\ref{transferability_cifar100_heatmaps_double} for CIFAR100, with $\varepsilon=16/255$ for the baselines and $\edgf=\eta\varepsilon$ for our attack. To better reflect transferability, we also present in Table~\ref{mtasr_high_noise}, the average cross-model ASR, calculated according to each heatmap of Fig.~\ref{transferability_imagenet_heatmaps_double} and Fig.~\ref{transferability_cifar100_heatmaps_double}. We observe that, for the pretrained models, on both datasets, our attack is almost identically effective to AutoAttack, in terms of self-model ASR, demonstrating that attacks derived from an appropriate optimization problem, over a specified perturbation set, can outperform frequency-based attacks that only follow the formulation of \eqref{freqattack} and \eqref{maskgrad}, without being accompanied by a grounded maximization problem. Interestingly, while effectiveness and transferability of both baselines increase as we transfer to a more noisy regime than that of Fig.~\ref{imagenet_trans} and Fig.~\ref{cifar_trans}, our attack still outperforms both baselines, consistently for both datasets, in terms of the transferability rate. This further justifies our motivation for designing an optimization problem whose perturbation constraint set is endowed with a DGF, which better aligns with inherent SF patterns of natural images, thus the proposed attack captures more easily than purely spatial-/frequency-based attacks the shared spectral sensitivities of a range of models, both pretrained and defended.

Finally, we illustrate original and adversarial image samples from ImageNet and CIFAR100, in Fig.~\ref{basefigkeyboards} top and bottom, respectively, pertaining to the proposed attack, AutoAttack, and F-PGD, on pretrained models, as a means of examining whether our proposed methodology has the same effect depicted in Fig.~\ref{imagenet} and Fig.~\ref{cifar100}, which relied on RobustBench models evaluations. For these image samples, we separately present in Fig. the attacks illustrated in the spatial and frequency domain, so as to elaborate on the representation ability of each of the three attack methods. Firstly, we observe that our attack outperforms the baselines, by scoring a mildly higher SSIM value, for both datasets, albeit scoring the highest possible ASR, along with AutoAttack. Additionally, while all images retain a reasonable visual quality, they exhibit structural differences in both spatial- and frequency-based representations, highlighting that the attack effectiveness is not solely determined by operating in the frequency domain, but rather by how perturbations are organized and constrained during optimization. This phenomenon is further supported by Fig.~\ref{spectraappen}, where we notice that the SF localization of the proposed attack is more evident, spread strategically in the transformed-domain, than that of the baselines. This observation conforms with our intuition of deploying Gabor frames, which induce norms that align better with natural image patterns than standard-$\ell_p$ balls, thereby leading to an attack that more flexibly handles image geometry to be highly effective, without sacrificing image quality.

\begin{figure}[t!]
    \centering
    \begin{subfigure}[h]{0.7\textwidth}
        \centering
        \includegraphics[width=\textwidth]{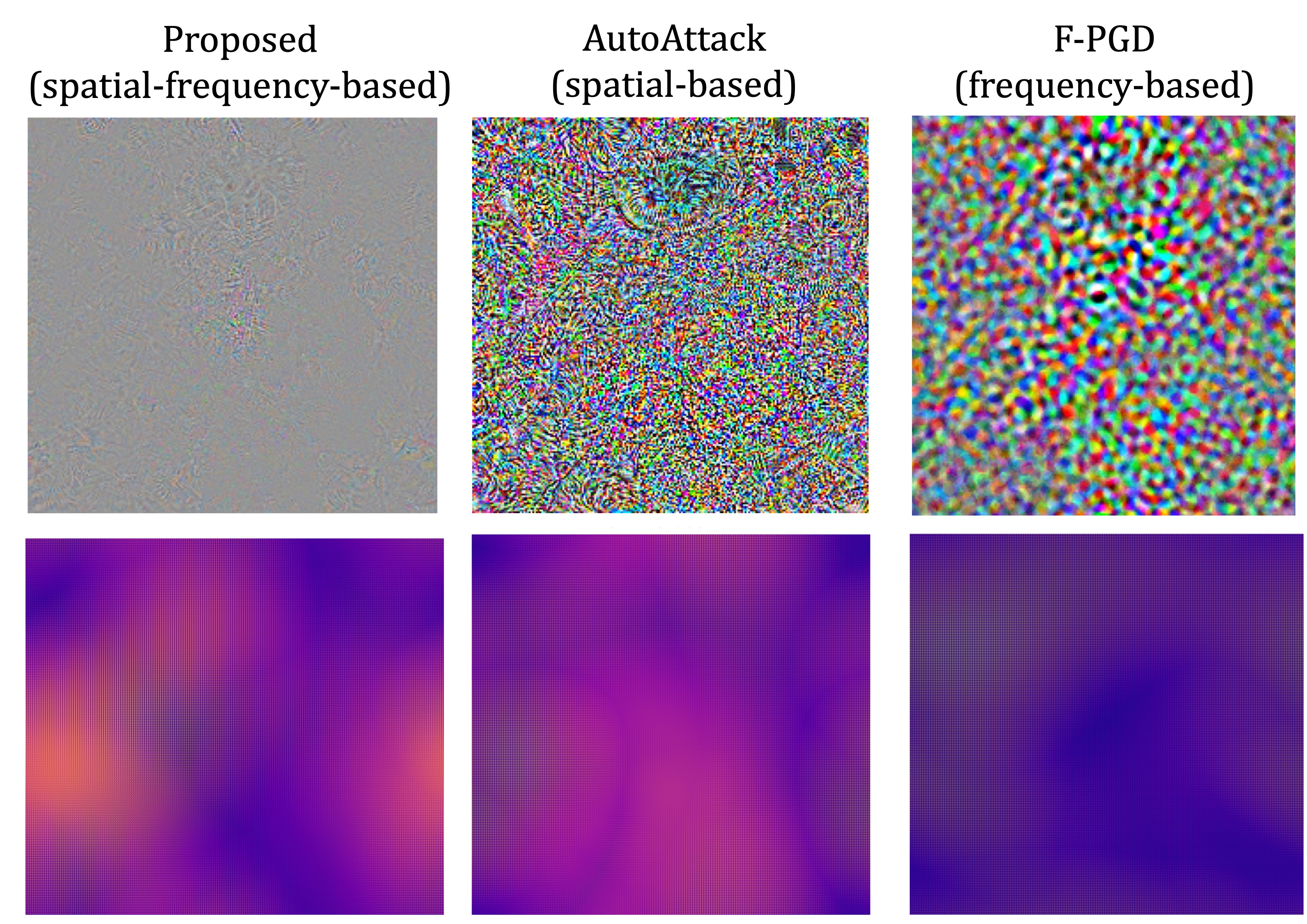}
        \caption{ImageNet}
    \end{subfigure}%
    
    \begin{subfigure}[h]{0.7\textwidth}
    \centering
    \includegraphics[width=\textwidth]{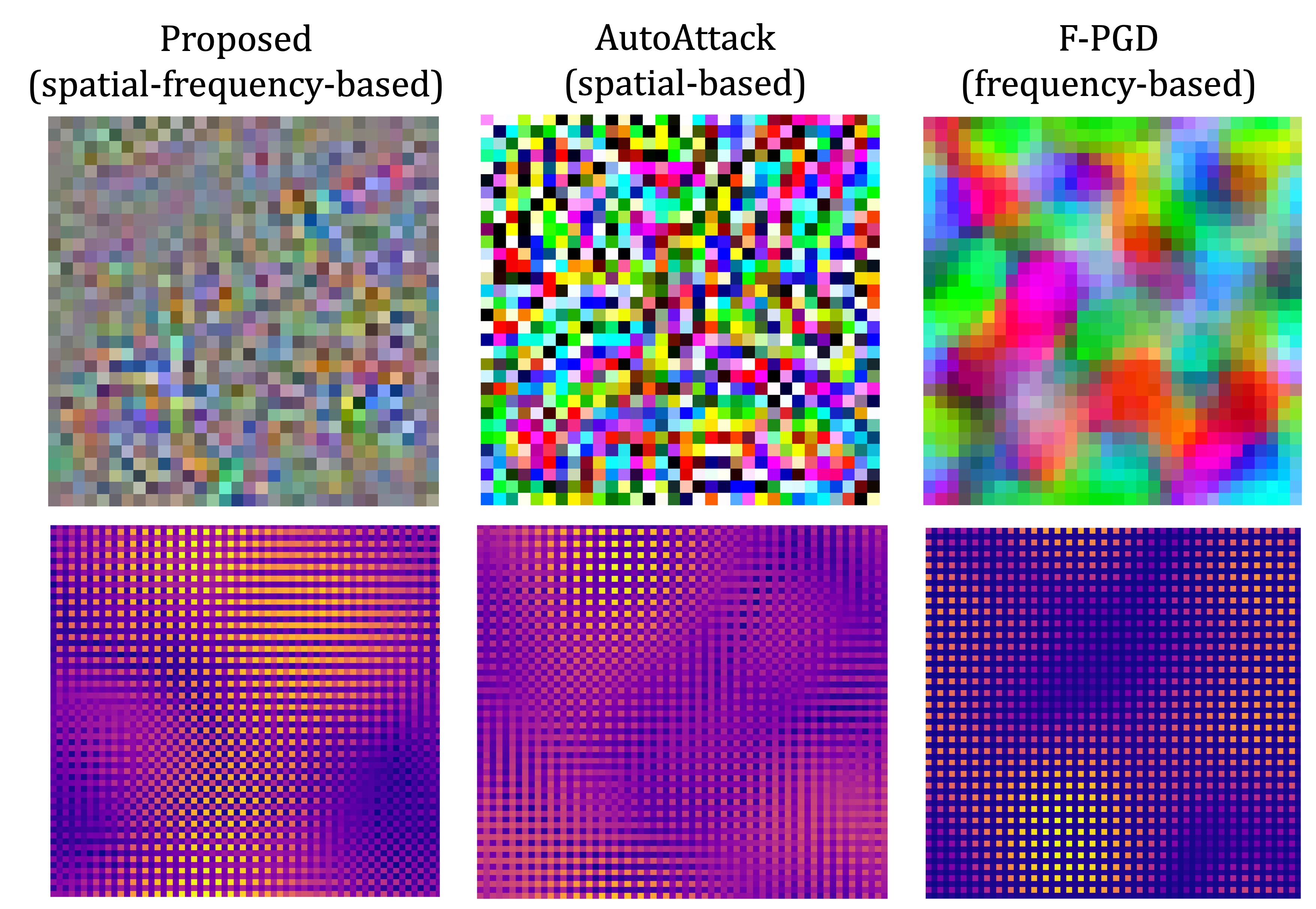}
    \caption{CIFAR100}
    \end{subfigure}
    \caption{Exemplary \textbf{spatial (top)} and \textbf{frequency (bottom)} representation of the \textbf{proposed attack and the two baselines}, corresponding to the adversarial images of Figure~\ref{basefigkeyboards}. Similarly to Fig.~\ref{spectrashampoos} and Fig.~\ref{spectraflowers}, we observe that the SF ability of our attack, for both datasets, provides a balanced representation in the transformed space, while the baselines rely mostly on spatial and frequency coordinates, thus missing the flexibility offered by the DGF.}
    \label{spectraappen}
\end{figure}

\section{Impact Statement}
\label{impactstate}
This work advances the understanding of adversarial vulnerabilities in deep learning models, by introducing a principled framework for generating frequency-based attacks, with our primary goal being the improvement of robustness evaluation and creation of better defenses.

Specifically, our framework can help researchers and practitioners better understand fundamental weaknesses in neural networks, particularly those that persist across different architectures. By demonstrating that attacks constrained in structured transform domains can be highly effective -- even against adversarially trained models -- we highlight the need for more comprehensive defense mechanisms, accounting for highly structured frequency-based attacks. This could lead to more robust models in safety-critical applications.

Like other adversarial attack research, this work could potentially be misused to compromise vision systems. However, we believe the benefits of openly studying these vulnerabilities outweigh the risks, as understanding attack mechanisms is essential for developing effective defenses. Our work does not introduce fundamentally new attack capabilities, but rather provides a more principled understanding of existing frequency-based attacks. By grounding adversarial behavior in a mathematically principled framework, our study provides tools for interpreting model vulnerabilities in spectral terms. We anticipate that these insights can encourage the development of robustness benchmarks and training schemes that better account for structure in real-world data representations.

\end{appendices}


\newpage
\section*{NeurIPS Paper Checklist}
\begin{enumerate}

\item {\bf Claims}
    \item[] Question: Do the main claims made in the abstract and introduction accurately reflect the paper's contributions and scope?
    \item[] Answer: \answerYes{}.
    \item[] Justification: All the claims found in the abstract and introduction are properly supported in the main results of Sec.~\ref{main} and the experimental results of Sec.~\ref{exp}. Additionally, proofs for our derived theory can be found in Appendix~\ref{proofappen} and  experimental extensions in Appendix~\ref{expappen}.
    \item[] Guidelines:
    \begin{itemize}
        \item The answer \answerNA{} means that the abstract and introduction do not include the claims made in the paper.
        \item The abstract and/or introduction should clearly state the claims made, including the contributions made in the paper and important assumptions and limitations. A \answerNo{} or \answerNA{} answer to this question will not be perceived well by the reviewers. 
        \item The claims made should match theoretical and experimental results, and reflect how much the results can be expected to generalize to other settings. 
        \item It is fine to include aspirational goals as motivation as long as it is clear that these goals are not attained by the paper. 
    \end{itemize}

\item {\bf Limitations}
    \item[] Question: Does the paper discuss the limitations of the work performed by the authors?
    \item[] Answer: \answerYes{}.
    \item[] Justification: Throughout the paper, we state the limitations of our work, among which the fact that a more thorough empirical evaluation is needed in the future for our proposed methodology (cf. Sec.~\ref{exp}).
    \item[] Guidelines:
    \begin{itemize}
        \item The answer \answerNA{} means that the paper has no limitation while the answer \answerNo{} means that the paper has limitations, but those are not discussed in the paper. 
        \item The authors are encouraged to create a separate ``Limitations'' section in their paper.
        \item The paper should point out any strong assumptions and how robust the results are to violations of these assumptions (e.g., independence assumptions, noiseless settings, model well-specification, asymptotic approximations only holding locally). The authors should reflect on how these assumptions might be violated in practice and what the implications would be.
        \item The authors should reflect on the scope of the claims made, e.g., if the approach was only tested on a few datasets or with a few runs. In general, empirical results often depend on implicit assumptions, which should be articulated.
        \item The authors should reflect on the factors that influence the performance of the approach. For example, a facial recognition algorithm may perform poorly when image resolution is low or images are taken in low lighting. Or a speech-to-text system might not be used reliably to provide closed captions for online lectures because it fails to handle technical jargon.
        \item The authors should discuss the computational efficiency of the proposed algorithms and how they scale with dataset size.
        \item If applicable, the authors should discuss possible limitations of their approach to address problems of privacy and fairness.
        \item While the authors might fear that complete honesty about limitations might be used by reviewers as grounds for rejection, a worse outcome might be that reviewers discover limitations that aren't acknowledged in the paper. The authors should use their best judgment and recognize that individual actions in favor of transparency play an important role in developing norms that preserve the integrity of the community. Reviewers will be specifically instructed to not penalize honesty concerning limitations.
    \end{itemize}

\item {\bf Theory assumptions and proofs}
    \item[] Question: For each theoretical result, does the paper provide the full set of assumptions and a complete (and correct) proof?
    \item[] Answer: \answerYes{}
    \item[] Justification: For the course of our theoretical results presented in Sec.~\ref{main}, we state minimal and justified assumptions, to clarify every possible dependency of the problem. All associated proofs can be found in a complete form at Appendix~\ref{proofappen}.
    \item[] Guidelines:
    \begin{itemize}
        \item The answer \answerNA{} means that the paper does not include theoretical results. 
        \item All the theorems, formulas, and proofs in the paper should be numbered and cross-referenced.
        \item All assumptions should be clearly stated or referenced in the statement of any theorems.
        \item The proofs can either appear in the main paper or the supplemental material, but if they appear in the supplemental material, the authors are encouraged to provide a short proof sketch to provide intuition. 
        \item Inversely, any informal proof provided in the core of the paper should be complemented by formal proofs provided in appendix or supplemental material.
        \item Theorems and Lemmas that the proof relies upon should be properly referenced. 
    \end{itemize}

    \item {\bf Experimental result reproducibility}
    \item[] Question: Does the paper fully disclose all the information needed to reproduce the main experimental results of the paper to the extent that it affects the main claims and/or conclusions of the paper (regardless of whether the code and data are provided or not)?
    \item[] Answer: \answerYes{}
    \item[] Justification: In Sec.~\ref{settings}, we provide details on the experimental setup of the paper leading to the corresponding experimental results. We also employ example datasets used in papers that are close to our work, while in Sec.~\ref{expappen}, we outline more experimental details, including choice of hyperparameters for reproducibility purposes.
    \item[] Guidelines:
    \begin{itemize}
        \item The answer \answerNA{} means that the paper does not include experiments.
        \item If the paper includes experiments, a \answerNo{} answer to this question will not be perceived well by the reviewers: Making the paper reproducible is important, regardless of whether the code and data are provided or not.
        \item If the contribution is a dataset and\slash or model, the authors should describe the steps taken to make their results reproducible or verifiable. 
        \item Depending on the contribution, reproducibility can be accomplished in various ways. For example, if the contribution is a novel architecture, describing the architecture fully might suffice, or if the contribution is a specific model and empirical evaluation, it may be necessary to either make it possible for others to replicate the model with the same dataset, or provide access to the model. In general. releasing code and data is often one good way to accomplish this, but reproducibility can also be provided via detailed instructions for how to replicate the results, access to a hosted model (e.g., in the case of a large language model), releasing of a model checkpoint, or other means that are appropriate to the research performed.
        \item While NeurIPS does not require releasing code, the conference does require all submissions to provide some reasonable avenue for reproducibility, which may depend on the nature of the contribution. For example
        \begin{enumerate}
            \item If the contribution is primarily a new algorithm, the paper should make it clear how to reproduce that algorithm.
            \item If the contribution is primarily a new model architecture, the paper should describe the architecture clearly and fully.
            \item If the contribution is a new model (e.g., a large language model), then there should either be a way to access this model for reproducing the results or a way to reproduce the model (e.g., with an open-source dataset or instructions for how to construct the dataset).
            \item We recognize that reproducibility may be tricky in some cases, in which case authors are welcome to describe the particular way they provide for reproducibility. In the case of closed-source models, it may be that access to the model is limited in some way (e.g., to registered users), but it should be possible for other researchers to have some path to reproducing or verifying the results.
        \end{enumerate}
    \end{itemize}

\item {\bf Open access to data and code}
    \item[] Question: Does the paper provide open access to the data and code, with sufficient instructions to faithfully reproduce the main experimental results, as described in supplemental material?
    \item[] Answer: \answerYes{}
    \item[] Justification: Upon acceptance, we will provide a link to a public github repository with pytorch code, and sufficient documentation for reproducibility of all the experimental results that accompany the paper.
    \item[] Guidelines:
    \begin{itemize}
        \item The answer \answerNA{} means that paper does not include experiments requiring code.
        \item Please see the NeurIPS code and data submission guidelines (\url{https://neurips.cc/public/guides/CodeSubmissionPolicy}) for more details.
        \item While we encourage the release of code and data, we understand that this might not be possible, so \answerNo{} is an acceptable answer. Papers cannot be rejected simply for not including code, unless this is central to the contribution (e.g., for a new open-source benchmark).
        \item The instructions should contain the exact command and environment needed to run to reproduce the results. See the NeurIPS code and data submission guidelines (\url{https://neurips.cc/public/guides/CodeSubmissionPolicy}) for more details.
        \item The authors should provide instructions on data access and preparation, including how to access the raw data, preprocessed data, intermediate data, and generated data, etc.
        \item The authors should provide scripts to reproduce all experimental results for the new proposed method and baselines. If only a subset of experiments are reproducible, they should state which ones are omitted from the script and why.
        \item At submission time, to preserve anonymity, the authors should release anonymized versions (if applicable).
        \item Providing as much information as possible in supplemental material (appended to the paper) is recommended, but including URLs to data and code is permitted.
    \end{itemize}

\item {\bf Experimental setting/details}
    \item[] Question: Does the paper specify all the training and test details (e.g., data splits, hyperparameters, how they were chosen, type of optimizer) necessary to understand the results?
    \item[] Answer: \answerYes{}
    \item[] Justification: We briefly describe the main experimental setup in Section~\ref{settings}, and then further elaborate on all details in Appendix~\ref{expappen}.
    \item[] Guidelines:
    \begin{itemize}
        \item The answer \answerNA{} means that the paper does not include experiments.
        \item The experimental setting should be presented in the core of the paper to a level of detail that is necessary to appreciate the results and make sense of them.
        \item The full details can be provided either with the code, in appendix, or as supplemental material.
    \end{itemize}

\item {\bf Experiment statistical significance}
    \item[] Question: Does the paper report error bars suitably and correctly defined or other appropriate information about the statistical significance of the experiments?
    \item[] Answer: \answerYes{}.
    \item[] Justification:  All experiments are repeated at least 10 times with different random seeds, and results are reported as averages over these runs. Additionally, we compute the standard deviation across runs, which we found to be consistently small (typically below 0.05 in absolute accuracy), and thus corresponding error bars are omitted from the figures for the sake of readability.
    \item[] Guidelines:
    \begin{itemize}
        \item The answer \answerNA{} means that the paper does not include experiments.
        \item The authors should answer \answerYes{} if the results are accompanied by error bars, confidence intervals, or statistical significance tests, at least for the experiments that support the main claims of the paper.
        \item The factors of variability that the error bars are capturing should be clearly stated (for example, train/test split, initialization, random drawing of some parameter, or overall run with given experimental conditions).
        \item The method for calculating the error bars should be explained (closed form formula, call to a library function, bootstrap, etc.)
        \item The assumptions made should be given (e.g., Normally distributed errors).
        \item It should be clear whether the error bar is the standard deviation or the standard error of the mean.
        \item It is OK to report 1-sigma error bars, but one should state it. The authors should preferably report a 2-sigma error bar than state that they have a 96\% CI, if the hypothesis of Normality of errors is not verified.
        \item For asymmetric distributions, the authors should be careful not to show in tables or figures symmetric error bars that would yield results that are out of range (e.g., negative error rates).
        \item If error bars are reported in tables or plots, the authors should explain in the text how they were calculated and reference the corresponding figures or tables in the text.
    \end{itemize}

\item {\bf Experiments compute resources}
    \item[] Question: For each experiment, does the paper provide sufficient information on the computer resources (type of compute workers, memory, time of execution) needed to reproduce the experiments?
    \item[] Answer: \answerYes{}
    \item[] Justification: For the course of our experiments, we present detailed descriptions on the computer resources in Appendix~\ref{expappen}.
    \item[] Guidelines:
    \begin{itemize}
        \item The answer \answerNA{} means that the paper does not include experiments.
        \item The paper should indicate the type of compute workers CPU or GPU, internal cluster, or cloud provider, including relevant memory and storage.
        \item The paper should provide the amount of compute required for each of the individual experimental runs as well as estimate the total compute. 
        \item The paper should disclose whether the full research project required more compute than the experiments reported in the paper (e.g., preliminary or failed experiments that didn't make it into the paper). 
    \end{itemize}
    
\item {\bf Code of ethics}
    \item[] Question: Does the research conducted in the paper conform, in every respect, with the NeurIPS Code of Ethics \url{https://neurips.cc/public/EthicsGuidelines}?
    \item[] Answer: \answerYes{}
    \item[] Justification: We preserved anonymity in our submission. Our submission abides by the NeurIPS Code of Ethics.
    \item[] Guidelines:
    \begin{itemize}
        \item The answer \answerNA{} means that the authors have not reviewed the NeurIPS Code of Ethics.
        \item If the authors answer \answerNo, they should explain the special circumstances that require a deviation from the Code of Ethics.
        \item The authors should make sure to preserve anonymity (e.g., if there is a special consideration due to laws or regulations in their jurisdiction).
    \end{itemize}

\item {\bf Broader impacts}
    \item[] Question: Does the paper discuss both potential positive societal impacts and negative societal impacts of the work performed?
    \item[] Answer: \answerYes{}
    \item[] Justification: We discuss potential societal impacts of our work in Appendix~\ref{impactstate}.
    \item[] Guidelines:
    \begin{itemize}
        \item The answer \answerNA{} means that there is no societal impact of the work performed.
        \item If the authors answer \answerNA{} or \answerNo, they should explain why their work has no societal impact or why the paper does not address societal impact.
        \item Examples of negative societal impacts include potential malicious or unintended uses (e.g., disinformation, generating fake profiles, surveillance), fairness considerations (e.g., deployment of technologies that could make decisions that unfairly impact specific groups), privacy considerations, and security considerations.
        \item The conference expects that many papers will be foundational research and not tied to particular applications, let alone deployments. However, if there is a direct path to any negative applications, the authors should point it out. For example, it is legitimate to point out that an improvement in the quality of generative models could be used to generate Deepfakes for disinformation. On the other hand, it is not needed to point out that a generic algorithm for optimizing neural networks could enable people to train models that generate Deepfakes faster.
        \item The authors should consider possible harms that could arise when the technology is being used as intended and functioning correctly, harms that could arise when the technology is being used as intended but gives incorrect results, and harms following from (intentional or unintentional) misuse of the technology.
        \item If there are negative societal impacts, the authors could also discuss possible mitigation strategies (e.g., gated release of models, providing defenses in addition to attacks, mechanisms for monitoring misuse, mechanisms to monitor how a system learns from feedback over time, improving the efficiency and accessibility of ML).
    \end{itemize}
    
\item {\bf Safeguards}
    \item[] Question: Does the paper describe safeguards that have been put in place for responsible release of data or models that have a high risk for misuse (e.g., pre-trained language models, image generators, or scraped datasets)?
    \item[] Answer: \answerNA{}
    \item[] Justification: : Our paper does not pose any such risks.
    \item[] Guidelines:
    \begin{itemize}
        \item The answer \answerNA{} means that the paper poses no such risks.
        \item Released models that have a high risk for misuse or dual-use should be released with necessary safeguards to allow for controlled use of the model, for example by requiring that users adhere to usage guidelines or restrictions to access the model or implementing safety filters. 
        \item Datasets that have been scraped from the Internet could pose safety risks. The authors should describe how they avoided releasing unsafe images.
        \item We recognize that providing effective safeguards is challenging, and many papers do not require this, but we encourage authors to take this into account and make a best faith effort.
    \end{itemize}

\item {\bf Licenses for existing assets}
    \item[] Question: Are the creators or original owners of assets (e.g., code, data, models), used in the paper, properly credited and are the license and terms of use explicitly mentioned and properly respected?
    \item[] Answer: \answerYes{}
    \item[] Justification: All original owners of assets are properly credited, e.g., we cite the original papers of the baseline attacks we investigate for comparison with our proposed attack.
    \item[] Guidelines:
    \begin{itemize}
        \item The answer \answerNA{} means that the paper does not use existing assets.
        \item The authors should cite the original paper that produced the code package or dataset.
        \item The authors should state which version of the asset is used and, if possible, include a URL.
        \item The name of the license (e.g., CC-BY 4.0) should be included for each asset.
        \item For scraped data from a particular source (e.g., website), the copyright and terms of service of that source should be provided.
        \item If assets are released, the license, copyright information, and terms of use in the package should be provided. For popular datasets, \url{paperswithcode.com/datasets} has curated licenses for some datasets. Their licensing guide can help determine the license of a dataset.
        \item For existing datasets that are re-packaged, both the original license and the license of the derived asset (if it has changed) should be provided.
        \item If this information is not available online, the authors are encouraged to reach out to the asset's creators.
    \end{itemize}

\item {\bf New assets}
    \item[] Question: Are new assets introduced in the paper well documented and is the documentation provided alongside the assets?
    \item[] Answer: \answerNA{}
    \item[] Justification: We do not release new assets.
    \item[] Guidelines:
    \begin{itemize}
        \item The answer \answerNA{} means that the paper does not release new assets.
        \item Researchers should communicate the details of the dataset\slash code\slash model as part of their submissions via structured templates. This includes details about training, license, limitations, etc. 
        \item The paper should discuss whether and how consent was obtained from people whose asset is used.
        \item At submission time, remember to anonymize your assets (if applicable). You can either create an anonymized URL or include an anonymized zip file.
    \end{itemize}

\item {\bf Crowdsourcing and research with human subjects}
    \item[] Question: For crowdsourcing experiments and research with human subjects, does the paper include the full text of instructions given to participants and screenshots, if applicable, as well as details about compensation (if any)? 
    \item[] Answer: \answerNA{}
    \item[] Justification: No crowdsourcing or human subjects were involved in the experiments conducted for this paper.
    \item[] Guidelines:
    \begin{itemize}
        \item The answer \answerNA{} means that the paper does not involve crowdsourcing nor research with human subjects.
        \item Including this information in the supplemental material is fine, but if the main contribution of the paper involves human subjects, then as much detail as possible should be included in the main paper. 
        \item According to the NeurIPS Code of Ethics, workers involved in data collection, curation, or other labor should be paid at least the minimum wage in the country of the data collector. 
    \end{itemize}

\item {\bf Institutional review board (IRB) approvals or equivalent for research with human subjects}
    \item[] Question: Does the paper describe potential risks incurred by study participants, whether such risks were disclosed to the subjects, and whether Institutional Review Board (IRB) approvals (or an equivalent approval/review based on the requirements of your country or institution) were obtained?
    \item[] Answer: \answerNA{}
    \item[] Justification: No IRB, crowd-sourcing, or research with human subjects were involved in the experiments conducted for this paper.
    \item[] Guidelines:
    \begin{itemize}
        \item The answer \answerNA{} means that the paper does not involve crowdsourcing nor research with human subjects.
        \item Depending on the country in which research is conducted, IRB approval (or equivalent) may be required for any human subjects research. If you obtained IRB approval, you should clearly state this in the paper. 
        \item We recognize that the procedures for this may vary significantly between institutions and locations, and we expect authors to adhere to the NeurIPS Code of Ethics and the guidelines for their institution. 
        \item For initial submissions, do not include any information that would break anonymity (if applicable), such as the institution conducting the review.
    \end{itemize}

\item {\bf Declaration of LLM usage}
    \item[] Question: Does the paper describe the usage of LLMs if it is an important, original, or non-standard component of the core methods in this research? Note that if the LLM is used only for writing, editing, or formatting purposes and does \emph{not} impact the core methodology, scientific rigor, or originality of the research, declaration is not required.
    \item[] Answer: \answerNA{}
    \item[] Justification: The core methods developed during the present research do not involve LLMs as any important, original, or non-standard components.
    \item[] Guidelines:
    \begin{itemize}
        \item The answer \answerNA{} means that the core method development in this research does not involve LLMs as any important, original, or non-standard components.
        \item Please refer to our LLM policy in the NeurIPS handbook for what should or should not be described.
    \end{itemize}

\end{enumerate}

\end{document}